\newtheorem{theorem}{Theorem}
\newtheorem{lemma}{Lemma}[theorem]
\newtheorem{proposition}{Proposition}[theorem]
\newtheorem{remark}{Remark}
\newcommand{\m}[1]{\mathbf{#1}}
\newcommand{\norm}[1]{\left\lVert#1\right\rVert}
\newcommand{\N}{\mathcal{N}}
\newcommand{\E}{\mathbb{E}}
\newcommand{\pr}{\mathbb{P}}
\newcommand{\G}{\mathbf{G}}
\newcommand{\W}{\mathbf{W}}
\newcommand{\B}{\mathbf{B}}
\newcommand{\A}{\mathbf{A}}
\newcommand{\C}{\mathbf{C}}
\newcommand{\D}{\mathbf{D}}
\newcommand{\J}{\mathbf{J}}
\newcommand{\x}{\mathbf{x}}
\newcommand{\z}{\mathbf{z}}
\newcommand{\hatx}[2]{\mathbf{\hat{x}}^{#1}_{#2}}
\newcommand{\M}{\mathbf{M}}
\newcommand{\etab}{\boldsymbol\eta}
\newcommand{\floor}[1]{\left\lfloor #1 \right\rfloor}
\newcommand{\one}{\mathbbm{1}}
\newcommand{\R}{\mathbb{R}}
\definecolor{airforceblue}{rgb}{0.36, 0.54, 0.66}
\title{Compressing Heavy-Tailed Weight Matrices for Non-Vacuous Generalization Bounds}
\author{%
  John Y. ~Shin \\
  RiskEcon® Lab \thanks{Please see the acknowledgements section for affiliation details.}\\
  Courant Institute of Mathematical Sciences\\
  New York University\\
  New York, NY 10003 \\
  \texttt{jys308@nyu.edu} \\
  % examples of more authors
  % \And
  % Coauthor \\
  % Affiliation \\
  % Address \\
  % \texttt{email} \\
  % \AND
  % Coauthor \\
  % Affiliation \\
  % Address \\
  % \texttt{email} \\
  % \And
  % Coauthor \\
  % Affiliation \\
  % Address \\
  % \texttt{email} \\
  % \And
  % Coauthor \\
  % Affiliation \\
  % Address \\
  % \texttt{email} \\
}
\begin{document}

\maketitle

\begin{abstract}
Heavy-tailed distributions have been studied in statistics, random matrix theory, physics, and econometrics as models of correlated systems, among other domains. Further, heavy-tail distributed eigenvalues of the covariance matrix of the weight matrices in neural networks have been shown to empirically correlate with test set accuracy in several works (e.g. \cite{ mahoney2019traditional}), but a formal relationship between heavy-tail distributed parameters and generalization bounds was yet to be demonstrated. In this work, the compression framework of  \cite{arora2018stronger} is utilized to show that matrices with heavy-tail distributed matrix elements can be compressed, resulting in networks with sparse weight matrices. Since the parameter count has been reduced to a sum of the non-zero elements of sparse matrices, the compression framework allows us to bound the \emph{generalization gap} of the resulting compressed network with a non-vacuous generalization bound. Further, the action of these matrices on a vector is discussed, and how they may relate to compression and \emph{resilient classification} is analyzed. 
\end{abstract}

\section{Introduction}

The \emph{generalization gap} describes the difference in performance of a statistical model over the underlying distribution and over the training dataset \cite{jiang2018predicting}. Bounding the \emph{generalization gap} with quantities that elucidate mechanisms for generalization are of interest both from a theoretical and practical point-of-view, where articulating the mechanisms may guide architecture selection as well as contributing to new ways of measuring the performance of neural networks. In the traditional machine learning literature, methods to bound the \emph{generalization gap} include methods utilizing Rademacher complexity, Vapnik–Chervonenkis (VC) dimension, and uniform stability. Experimental work employing progressively more randomized labels identified limitations for such methods when applied to neural networks  (\cite{zhang2016understanding}), and the aforementioned bounds are generally found to be \emph{loose} for neural networks. 

Several lines of research, both theoretical and empirical, have been undertaken to better understand the \emph{generalization gap} of neural networks. These include PAC-Bayesian methods (\cite{mcallester1999some, mcallester1999pac, neyshabur2017pac}), the flatness of local minima (\cite{keskar2016large, foret2020sharpness}), compression (\cite{arora2018stronger}), margin and margin distributions (\cite{bartlett2017spectrally, jiang2018predicting}), as well as power-law behavior (\cite{mahoney2019traditional, martin2020heavy}). In companion, recent experimental works (\cite{jiang2019fantastic, dziugaite2020search}) have comparatively assessed many of these generalization measures and their performance on a variety of datasets. While the metric of \emph{sharpness} appears to serve as a complexity measure for generalization, the latter of the two works raises questions on this connection. 

Heavy-tailed distributions are distributions whose moment-generating function diverges for all positive parameterizations (\cite{foss2011introduction}), and have been used as models of correlated systems (\cite{bakhshizadeh2020sharp, aggarwal2018goe, levy, topeigenvalue, tarquini2016level, financial}). In this work, we assume that the fully trained weight matrices of a \emph{fully connected neural network} (FCNN) have power-law structure (\cite{martin2020heavy} and  \cite{ mahoney2019traditional}), and  using the compression framework of \cite{arora2018stronger}, prove bounds on generalization given this structure. The bound depends on a compressed parameter count that is the sum of the non-zero entries of sparse matrices. This results in a non-vacuous generalization bound, i.e., a bound that is less than one. The key idea of how to compress these matrices was inspired by the work of \cite{aggarwal2018goe} and \cite{bakhshizadeh2020sharp}. %We will also describe how this is consistent with the PAC-Bayes framework. 
The work will be structured as follows:

\begin{enumerate}
    \item Review the compression framework of \cite{arora2018stronger} and informally state the main theorem of their work. Describe the problem set-up. 
    \item Demonstrate that heavy-tailed matrices can be compressed and the resulting sparsity of the compressed weight matrix using techniques from standard measure concentration.
    \item Prove a bound on the \emph{generalization gap} of an FCNN with compressed weight matrices using the compression framework.
    \item Analyze the action of heavy-tailed weight matrices on a vector, and show how they relate to compression and \emph{resilient classification}, as well as to the stable rank of a matrix. 
    \item Provide an empirical evaluation of the matrix compression, and demonstrate that the compression results in a comparable accuracy. 
\end{enumerate}

\section{Background and Problem Setup}

This work builds on the framework of  \cite{arora2018stronger}, and the basic notions of the work are reviewed. The compression framework allows one to bound the \emph{generalization gap} of a compressed version of a given network. In the original work, the weight matrices of an FCNN are compressed using a form of \emph{Johnson-Lindenstrauss} compression to compress the weights themselves \cite{johnson1984extensions}.

We work within the multiclass classification setup. Suppose we have a sample $(\x, y) \sim \mathcal{D}$, where $\x \in \R^b$ with a label $y \in \{1, \cdots, k\}$. A multiclass classifier $f$ maps input $\x$ to $f(\x) \in \R^k$. With $\gamma > 0$ as the margin, the expected margin loss is given as:

\begin{equation}
L_{\gamma}(f) = \pr_{(\x,y) \sim \mathcal{D}}\left[f(\x)[y] \leq \gamma + \max_{j \neq y} f(\x) [j] \right]
\end{equation}

For $\gamma = 0$, this represents the classification loss. In addition, we will denote with $\hat{L}_{\gamma}$ the empirical estimate of the margin loss over the training dataset $S$. 

A goal of the compression framework is to reduce the number of parameters of the original network through compression, in the hopes that the new parameter count of the compressed network will yield a realistic bound on the classification loss, while minimizing the error introduced through compression.  

\begin{theorem} (\cite{arora2018stronger} - Informally Stated.)
Suppose $g$ is a family of classifiers, which are compressed versions of the classifier $f$. Let $q$ be the parameter count of $g$ each of which can have at most $r$ discrete values. Let $S$ be a training set with $m$ samples. If the output of the  classifier $f$ is within $\gamma$ of $g$ over the training set, then with high probability:
\begin{equation}
L_0(g) \leq \hat{L}_{\gamma}(f) + O\left(\sqrt{\frac{q \log r}{m}}\right)
\end{equation}

\end{theorem}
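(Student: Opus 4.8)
The plan is to prove this the way such compression bounds are always proved: reduce the (a priori infinite) search over parameters to a finite hypothesis class, apply a union bound, and then convert the empirical margin guarantee into an empirical $0$-loss guarantee for the compressed network. The crucial first observation is that, even though the particular compressed network $g$ is chosen using the training set $S$, it is drawn from a \emph{family} $\mathcal{G}$ that is fixed once we fix the compression procedure (together with any auxiliary ``helper string'' randomness, which by assumption does not depend on $S$). Since each element of $\mathcal{G}$ is specified by $q$ parameters, each taking at most $r$ values, $|\mathcal{G}| \le r^{q}$, hence $\log|\mathcal{G}| \le q\log r$.

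First I would prove a uniform deviation bound over $\mathcal{G}$. For any \emph{fixed} classifier $h$, the empirical $0$-loss $\hat{L}_0(h)$ is the mean of $m$ i.i.d.\ $\{0,1\}$ random variables with expectation $L_0(h)$, so Hoeffding's inequality gives $L_0(h) \le \hat{L}_0(h) + \sqrt{\log(1/\delta')/(2m)}$ with probability at least $1-\delta'$ over the draw of $S$. Setting $\delta' = \delta/|\mathcal{G}|$ and taking a union bound over all $h \in \mathcal{G}$ yields, with probability at least $1-\delta$, simultaneously for every $h \in \mathcal{G}$,
\begin{equation}
L_0(h) \le \hat{L}_0(h) + O\!\left(\sqrt{\frac{q\log r + \log(1/\delta)}{m}}\right).
\end{equation}
This applies in particular to the data-dependent choice $g \in \mathcal{G}$, which is legitimate precisely because the bound holds uniformly over the whole pre-determined family.

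The second step is a margin-transfer inequality: if $f$ and $g$ produce sufficiently close outputs on the training set — concretely $\|f(\x) - g(\x)\|_\infty \le \gamma/2$ for every training point $\x$ — then $\hat{L}_0(g) \le \hat{L}_\gamma(f)$. Indeed, if $g$ misclassifies a training point $(\x,y)$ at margin $0$, i.e.\ $g(\x)[y] \le \max_{j\ne y} g(\x)[j]$, then with $j^\star = \argmax_{j\ne y} g(\x)[j]$ we have $f(\x)[y] \le g(\x)[y] + \gamma/2 \le g(\x)[j^\star] + \gamma/2 \le f(\x)[j^\star] + \gamma \le \max_{j\ne y} f(\x)[j] + \gamma$, so $(\x,y)$ is counted by $\hat{L}_\gamma(f)$. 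Chaining this with the uniform bound of the previous step gives $L_0(g) \le \hat{L}_\gamma(f) + O(\sqrt{(q\log r)/m})$, folding the $\log(1/\delta)$ term into the ``with high probability'' qualifier.

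I expect the main obstacle to be conceptual rather than computational. One must ensure the union bound is over a family $\mathcal{G}$ that does \emph{not} depend on $S$, so that Hoeffding legitimately applies to each member before $S$ is drawn and only afterwards is $g$ selected; this is exactly why the informal statement speaks of a ``family of classifiers'', and why the price paid is $\sqrt{(q\log r)/m}$ rather than anything scaling with the (possibly enormous) parameter count of $f$ itself. The remaining issues are bookkeeping: the constant relating the allowed perturbation size to the margin $\gamma$ appearing in $\hat{L}_\gamma(f)$ is convention-dependent (``within $\gamma$'' coordinatewise yields $\hat{L}_{2\gamma}(f)$, whereas ``within $\gamma/2$'' yields $\hat{L}_\gamma(f)$ as above), and if the compressed parameters were a priori continuous one would first discretize each to $r$ levels and charge the extra perturbation to the same margin-transfer step — but the statement as given already supplies discreteness as a hypothesis.
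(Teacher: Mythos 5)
Your proposal is correct and is essentially the argument behind the statement: the paper does not prove this theorem itself (it is quoted informally from \cite{arora2018stronger}), and the proof there proceeds exactly as you do --- a Hoeffding/union bound over the at most $r^{q}$ discrete compressed classifiers, combined with the margin-transfer step $\hat{L}_0(g) \leq \hat{L}_{\gamma}(f)$ from the closeness of $f$ and $g$ on $S$. Your remark about the factor-of-two convention in ``within $\gamma$'' is the only bookkeeping the informal statement hides, and flagging it is appropriate.
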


\section{Compressing Heavy-Tails and Sparsity}

Heavy-tailed distributions are probability distributions whose moment-generating function diverges for all positive parameterizations. A class of heavy-tailed distributions are the L\'evy $\alpha$-stable distributions. The L\'evy $\alpha$-stable distributions are a family of distributions parameterized by a stability parameter, $\alpha \in (0, 2]$ for which $\alpha = 2$ corresponds to the Gaussian distribution and $\alpha = 1$ corresponds to the Cauchy distribution. \footnote{The Cauchy distribution is defined with the probability density function of $p(x) = \frac{1}{\pi \gamma \left[1+ \frac{(x-x_0)}{\gamma}^2 \right]}$, where $\gamma$ is a scale paramater, and $x_0$ is a location parameter.} 
For $\alpha \neq 2$, this family is heavy-tailed. For $\alpha < 2$, the variance of these distributions are undefined, and for $\alpha \leq 1$, the mean is undefined. Random variables drawn from these distributions are stable under addition, that is, given two independent copies $(X_1, X_2)$ of a random variable $X$, their sum $a X_1 + b X_2$ for any $a, b > 0$ is given by $cX+d$ for some $c > 0$ and $d$.  The $\alpha$-stable property generalizes the classical \emph{Central Limit Theorem} to random variables that do not exhibit finite variance. Except for the special cases, this family of distributions cannot be written in terms of elementary functions \cite{nolan2009univariate}.\footnote{If a random variable $X$ is drawn from an $\alpha$-stable distribution, the  characteristic function of $X$ is given by: $$\E\left[ \exp(i u X) \right] = \begin{cases}
\exp\left( - c^{\alpha}|u|^{\alpha} \left[1 + i \beta (\tan\frac{\pi \alpha}{2})(\text{sign } u)(|c u|^{1 - \alpha} - 1) \right] + i \delta u \right), \quad \alpha \neq 1\\
\exp\left( - c |u| \left[1 + i\beta \frac{2}{\pi} (\text{sign } u) \log(\gamma |u|) \right] + i \delta u \right), \quad \alpha = 1
\end{cases}$$ where $c \geq 0$ is a scale parameter, $\beta \in [-1, 1]$ is an asymmetry parameter, $\delta \in \R$ is a location parameter, and $\alpha \in (0, 2]$ is the stability index.} In Figure \ref{levystable}, we plot the L\'evy $\alpha$-stable distributions as a function of the stability parameter $\alpha$. As shown in the figure, modifying the stability parameter $\alpha$ affects the tail of the distribution.

\begin{figure}
    \centering
    \includegraphics[scale=0.55]{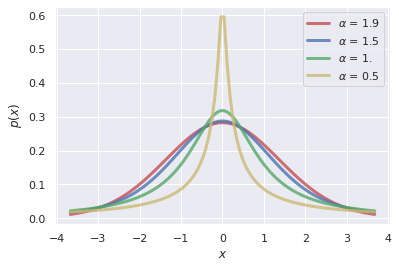}
    \caption{The L\'evy $\alpha$-stable distributions as a function of the stability parameter $\alpha$. The stability parameter $\alpha$ controls the tail of the distribution.}
    \label{levystable}
\end{figure}

\subsection{$\alpha$-stable Distributions and Stochastic Gradient Descent}

While we assume that the fully trained weight matrices of an FCNN have power-law structure in the following sections, we do not demonstrate how this structure arises during the course of training. Several works have explored the role of heavy-tailed behavior in the weights of a neural network via stochastic gradient descent (SGD)  (e.g., \cite{hodgkinson2020multiplicative, gurbuzbalaban2020heavy} and \cite{csimcsekli2019heavy}). We give a high-level description of these works. 

Suppose we have a neural network classifier $f(\x, \m{w})$ which maps an input $\x$ to $f(\x, \m{w}) \in \R^k$ given the parameters $\m{w}$. Suppose $L(f, y)$ is the loss function with $y$ as the true label. In deep learning, it is common to use a version of SGD for training (See for example \cite{ruder2016overview}), that is, a single sample or batch of samples are used to approximate the full gradient:

\begin{equation}
\m{w}_{t+1} = \m{w}_{t} - \eta \nabla_{\m{w}_t}\tilde{L}(f(\x_i, \m{w}_t), y_i)
\end{equation}

Where $\eta$ is the step-size, and $\nabla_{\m{w}} \tilde{L} \equiv (1/b) \sum_{i \in B} \nabla_{\m{w}} L(f(\x_i, \m{w}) , y_i)$, or the approximation of the gradient over a random subset, $B$, of the training set with cardinality $b$. 

Over the iterations $t$, this can be seen as a sum of random variables. It is hypothesized that this sum will be attracted to an $\alpha$-stable law over the course of training. In the work of \cite{gurbuzbalaban2020heavy}, it was shown that heavy tails in the weights can arise in SGD even in simple settings where the input data is Gaussian, and not heavy-tailed. It is  noted that the developed proof of the generalization bound in this work only assumes power-law structure (not $\alpha$-stability), and a portion of the analysis of the matrix action utilizes $\alpha$-stability. The overall logic can be stated as follows:

\begin{itemize}
    \item There is theoretical and empirical support that weights trained using SGD fall into an $\alpha$-stable distribution. $\alpha$-stable distributions behave asymptotically like a power-law. 
    \item It is assumed that the final weights of an FCNN are distributed as a power-law, and derive a generalization bound given this assumption. The $\alpha$-stable property is not needed to derive this bound. In principle, a similar bound can be derived from any heavy-tailed distribution. 
\end{itemize}

\subsection{Compressing Heavy-Tails}

Next, we wish to show that matrices with matrix elements distributed as a power-law can be compressed. Let $\W$ be a matrix of dimension $N_1 \times N_2 $ , where each matrix element is drawn $\W_{i,j} \sim (\alpha w_m^{\alpha})w^{-(\alpha+1)}, i \in \{1, \cdots, N_1\}, j \in \{1, \cdots N_2\}$ from a power-law distribution with exponent $\alpha$ and with support $w \in [w_m, \infty)$, where $w_m > 0$. The term $\alpha w_m^{\alpha}$ is a normalization factor to ensure the distribution integrates to 1. Suppose for some threshold $\tau > x_{m}$, $\A = \W_{i,j}\one[\W_{i,j} \leq \tau]$ and $\B = \W_{i,j}\one[\W_{i,j} > \tau]$. Thus, $\W = \A + \B$. We wish to replace the matrix $\A$ with a matrix $\sqrt{t}\m{G}$, a matrix with i.i.d normal distribution entries scaled by $\sqrt{t}$. We define the matrix $\W(t) \equiv \sqrt{t}\G + \B$. We wish to compute the error introduced by replacing $\W$ with $\W(t)$. Let $\overline{\W} \equiv \W - \W(t) = \A - \sqrt{t}\G$. We state the proposition:

\begin{proposition}
For any $\m{u} \in \R^{N_1}, \m{v} \in \R^{N_2}$, $\frac{2}{\tau^2 + t} = 2\frac{\log(\frac{3}{\eta})}{\epsilon^2}$: %$s = \epsilon \norm{\m{uv^T}}_F = \epsilon \norm{\m{u}}\norm{\m{v}}$: 
\begin{equation}
\pr \left\{ \left| \m{u}^T \overline{\W} \m{v} \right| > \epsilon \norm{\m{u}}\norm{\m{v}} \right\} \leq  \eta 
\end{equation}
\label{sherpa3}
\end{proposition}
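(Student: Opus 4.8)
By homogeneity of both sides in $\m{u}$ and in $\m{v}$, it suffices to treat the case $\norm{\m{u}} = \norm{\m{v}} = 1$; the claim is then that $\pr\{|Z| > \epsilon\} \le \eta$ for $Z := \m{u}^T\overline{\W}\m{v}$. Writing this out coordinatewise, $Z = \sum_{i,j} u_i v_j\,(A_{i,j} - \sqrt{t}\,G_{i,j})$ is a sum of $N_1 N_2$ independent random variables. The two structural facts that drive the argument are: (i) the truncation at $\tau$ forces $0 \le A_{i,j} \le \tau$, so each summand coming from $\A$ is a \emph{bounded} random variable with range of length $|u_i v_j|\,\tau$; and (ii) $\sqrt{t}\,\m{u}^T\G\m{v}$ is exactly a centered Gaussian with variance $t$, since $\m{u},\m{v}$ are unit vectors and $\G$ has i.i.d.\ standard normal entries. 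Thus $Z$ is a sum of a bounded-range part and an independent Gaussian part --- precisely the situation handled by a Chernoff/Hoeffding estimate.

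The plan is the standard moment-generating-function bound. For $\lambda > 0$, $\pr\{Z - \E Z > s\} \le e^{-\lambda s}\,\E\!\left[e^{\lambda(Z - \E Z)}\right]$, and by independence the MGF factors over the pairs $(i,j)$. For the $\A$-terms apply Hoeffding's lemma to the bounded variable $u_i v_j A_{i,j}$, giving a factor $\exp(\lambda^2 u_i^2 v_j^2 \tau^2/8)$; for the Gaussian part use $\E[e^{-\lambda\sqrt{t}\,\m{u}^T\G\m{v}}] = e^{\lambda^2 t/2}$. Multiplying and using $\sum_{i,j} u_i^2 v_j^2 = \norm{\m{u}}^2\norm{\m{v}}^2 = 1$ shows that $Z - \E Z$ is sub-Gaussian with a variance proxy of order $\tau^2 + t$. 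Optimizing over $\lambda$ and symmetrizing yields a tail of the form $\pr\{|Z - \E Z| > s\} \le C\exp\!\left(-c\,s^2/(\tau^2 + t)\right)$; carrying the numerical constants $C,c$ through Hoeffding's lemma and the Chernoff step and matching them against the prescribed calibration $\epsilon^2 = (\tau^2 + t)\log(3/\eta)$ --- equivalently $\tfrac{2}{\tau^2+t} = \tfrac{2\log(3/\eta)}{\epsilon^2}$ --- collapses the right-hand side at $s = \epsilon$ to $\eta$. Restoring the factors $\norm{\m{u}}\norm{\m{v}}$ then gives the claim.

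The one genuine subtlety, and the step I would be most careful about, is the treatment of the $\A$-part. Before truncation these entries are heavy-tailed with no finite moment-generating function, so they cannot be controlled directly; the whole argument rests on the observation that thresholding at $\tau$ makes $\A$ bounded and hence sub-Gaussian with a variance proxy that depends only on $\tau$ --- which is exactly why $\tau$, and not the heavy tail, governs the final bound. A related point is that the power law is supported on $[w_m,\infty)$, so $\E[A_{i,j}] > 0$ and the bias $\E[Z] = \E[\m{u}^T\A\m{v}]$ need not vanish for arbitrary $\m{u},\m{v}$; one handles this either by centering $\A$, or by observing that in the relevant regime (with $w_m$ small relative to the weight scale) this term is dominated by $\epsilon$ and can be absorbed into the slack between the ``$2$'' of a vanilla sub-Gaussian tail and the ``$3$'' appearing in the proposition. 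The Gaussian contribution, by contrast, is exact and requires no approximation, so I expect the bounded-MGF step and the constant bookkeeping to be the only places demanding attention.
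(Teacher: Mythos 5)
Your route is the paper's own: split $\overline{\W}=\A-\sqrt{t}\,\G$, use that truncation puts every $\A_{i,j}$ in $[0,\tau]$ while $\m{u}^T\G\m{v}$ is an exact centered Gaussian, run a Chernoff/MGF bound over the independent summands, optimize $\lambda$, and calibrate $\tau^2+t$ against $(\epsilon,\eta)$. The one methodological difference is that the paper never centers: it bounds $\A_{i,j}\le\tau$ deterministically inside the exponent and then applies $e^x\le e^{1+x^2/2}$ (which is exactly where its constant $3$ comes from), whereas you center each bounded summand and invoke Hoeffding's lemma. Your version is the cleaner probabilistic estimate --- but only for the centered quantity $Z-\E Z$.

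That is the genuine gap. The proposition bounds $|Z|=|\m{u}^T\overline{\W}\m{v}|$ itself, and the bias you flag does not go away: $\E Z=\E[\A_{1,1}]\,\bigl(\sum_i \m{u}_i\bigr)\bigl(\sum_j \m{v}_j\bigr)$ with $\E[\A_{1,1}]\gtrsim w_m>0$, so for nonnegative unit vectors $\m{u},\m{v}$ the bias is of order $w_m\sqrt{N_1N_2}$: an additive, dimension-growing shift of the deviation threshold. It cannot be absorbed into the multiplicative slack between a ``$2$'' and the ``$3$'' (that $3$ is literally $e\le 3$ pulled out of the $e^{1+x^2/2}$ step), and your fallback ``$w_m$ small relative to the weight scale'' is a hypothesis that appears nowhere in the statement. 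To land on the proposition as written you would have to either (i) add and use an explicit bound of the form $|\E[\m{u}^T\A\m{v}]|\le c\,\epsilon\norm{\m{u}}\norm{\m{v}}$, or (ii) follow the paper and treat the $\A$-sum without centering, via the deterministic replacement $\A_{i,j}\le\tau$ before taking expectations. For what it is worth, your instinct has located the weakest joint of the paper's own argument: after that replacement the paper bounds $\exp\bigl(\lambda\tau\sum_{i,j}\m{u}_i\m{v}_j\bigr)$ by $\exp\bigl(1+\tfrac{\lambda^2\tau^2}{2}\sum_{i,j}\m{u}_i^2\m{v}_j^2\bigr)$, silently trading the square of a sum for the sum of squares (and the replacement itself needs $\m{u}_i\m{v}_j\ge 0$), so the dimension-dependent bias is not rigorously dispatched there either; but as a reconstruction of the stated proposition, your proof as written does not close.
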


 For a given error $\epsilon$ and failure probability $\eta$, we can bound the error of replacing the heavy-tailed weight matrix with a Gaussian approximation. The proof of this is given in Appendix \ref{appendixA}, and utilizes basic measure concentration techniques. 

\subsection{$k$-Sparse Weight Matrices}

Next, we demonstrate the sparsity of the $\B$ matrices. We know $\B_{k,l} = \W_{k,l}\one[\W_{k,l} > \tau]$, $k \in \{ 1, \cdots, N_1\}, l \in \{1, \cdots, N_2\}$, where $\W_{k,l} \sim (\alpha w_m^{\alpha})w^{-(\alpha+1)}$ is drawn from a power law distribution with exponent $\alpha$ and with support $w \in [w_m, \infty)$, where $w_m > 0$. The probability that any given matrix element of $\B$ is non-zero is given by:

\begin{gather}
\int_{\tau}^{\infty}p(w) dw = \int_{\tau}^{\infty} (\alpha w_m^{\alpha})w^{-(\alpha+1)} dw = w_m^{\alpha}\tau^{-\alpha}
\end{gather}

With $\alpha > 0, \tau > w_m > 0$. So with probability $ w_m^{\alpha}\tau^{-\alpha}$, a given matrix element is non-zero, and with probability $1 -  w_m^{\alpha}\tau^{-\alpha}$, the matrix element is zero. Whether a given matrix element is zero or non-zero corresponds to a Bernoulli random variable with probability $p =  w_m^{\alpha}\tau^{-\alpha}$. Thus, the number of non-zero elements can be bounded by bounding the binomial distribution:

\begin{proposition}
Let $k$ be a chosen sparsity threshold, $p =w_m^{\alpha}\tau^{-\alpha}$ be the probability of a non-zero matrix element, $n = N_1 \times N_2$, and $X$ the number of non-zero matrix elements. For $k > np$, the probability that $X \geq k$ is bounded by:

\begin{equation}
\pr(X \geq k)  \leq \left(\frac{nw_m^{\alpha}\tau^{-\alpha}}{k}\right)^k \left(\frac{1-w_m^{\alpha}\tau^{-\alpha}}{1-\frac kn}\right)^{n-k}
\end{equation}
\label{binomial}
\end{proposition}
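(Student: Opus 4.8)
The plan is to recognize the claimed inequality as the standard Chernoff/Cram\'er upper-tail bound for a Binomial random variable and to derive it by the exponential-moment method. First I would fix the probabilistic model: since the entries $\W_{k,l}$ are drawn i.i.d.\ from the power law, the indicators $\one[\W_{k,l} > \tau]$ are i.i.d.\ Bernoulli with success probability $p = w_m^{\alpha}\tau^{-\alpha}$ (the value computed in the display just above the statement), so that $X = \sum_{k,l}\one[\W_{k,l}>\tau]$ is $\mathrm{Binomial}(n,p)$ with $n = N_1 N_2$.

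Next, for any $s>0$, apply Markov's inequality to $e^{sX}$: $\pr(X \ge k) \le e^{-sk}\,\E[e^{sX}] = e^{-sk}(1-p+pe^{s})^{n}$, where independence lets the moment-generating function factor into $n$ copies of $\E[e^{sX_1}] = 1-p+pe^{s}$. Then I would optimize over $s$: differentiating the exponent $-sk + n\log(1-p+pe^{s})$ and setting the derivative to zero gives $\frac{pe^{s}}{1-p+pe^{s}} = \frac{k}{n}$, i.e.\ $e^{s} = \frac{k(1-p)}{p(n-k)}$. The hypothesis $k>np$ (together with $k<n$) guarantees $e^{s}>1$, so this critical point is a genuine $s>0$ and the resulting Chernoff bound is valid; it is also the minimizer since the exponent is convex in $s$.

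Finally, I would substitute back. One checks $1-p+pe^{s} = \frac{1-p}{1-k/n}$ and $e^{-sk} = \bigl(\frac{p(n-k)}{k(1-p)}\bigr)^{k}$, and after collecting the factors of $n$, $k$, $n-k$, $1-p$, the product $e^{-sk}(1-p+pe^{s})^{n}$ collapses exactly to $\bigl(\frac{np}{k}\bigr)^{k}\bigl(\frac{1-p}{1-k/n}\bigr)^{n-k}$; writing $p = w_m^{\alpha}\tau^{-\alpha}$ then recovers the displayed inequality. (Equivalently, this is the bound $\pr(X\ge k)\le \exp\!\bigl(-n\,D(\tfrac kn\,\|\,p)\bigr)$ in terms of the binary relative entropy, which may simply be cited.) No step here is genuinely difficult: the only place requiring care is the bookkeeping in the last algebraic simplification — it is easy to mis-split $n^{n}$ as $n^{k}\cdot n^{n-k}$ — so I would present the substitution compactly rather than grind through every cancellation, the rest being the textbook Chernoff argument.
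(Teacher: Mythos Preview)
Your proposal is correct and follows essentially the same route as the paper: both recognize $X$ as $\mathrm{Binomial}(n,p)$ and apply the Chernoff upper-tail bound, the paper citing it directly in the relative-entropy form $\pr(X\ge k)\le \exp\bigl(-nD(k/n\,\|\,p)\bigr)$ and then expanding algebraically to the displayed expression, while you derive that same bound from Markov's inequality on $e^{sX}$ and optimize over $s$. The only difference is presentational --- you carry out the MGF optimization explicitly whereas the paper invokes the KL form and unpacks it --- and you yourself note the equivalence.
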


Thus the probability that $X$ is not $k$-sparse is a function of the underlying power-law, and sufficiently small for choice of $k$ and $\tau$. The full proof of this is given in Appendix \ref{appendixB}.

\section{A Bound on FCNNs}

Finally, we wish to prove a bound on the generalization error of an FCNN whose weights are compressed given the method outlined above. Suppose we have an FCNN $f$ with layer weight matrices $\W^i \in \R^{h_i \times h_{i-1}}, i \in \{1, \cdots, d\}$. Let $\W(t_i)^i$ be the approximation to this matrix as outlined above. We begin by stating the matrix approximation algorithm in Algorithm \ref{alg:algorithm1}.

\begin{algorithm}[ht]
\caption{Matrix Approximation ($\W, \epsilon, \eta, t, \tau$)}
\label{alg:algorithm1}

\begin{algorithmic}[1]
\Require Layer matrix $\W^i \in \R^{h_i \times h_{i-1}}$, error parameters ($\epsilon_i, \eta_i$), cutoffs ($\tau_i, t_i$), $\lambda_i = \frac{2}{\tau_i^2+t_i}$
\Ensure Returns $\W(t_i)^i$ s.t. $\forall $ fixed vectors $\m{u}, \m{v}$:
\begin{equation}
\pr \left\{|\m{u}^T \W(t_i)^i\m{v} - \m{u}^T \W^i \m{v}| \geq \epsilon_i \norm{\m{u}} \norm{\m{v}}\right\} \leq \eta_i 
\end{equation}
\State{Set $\lambda_i = 2\frac{\log(\frac{3}{\eta_i})}{\epsilon_i^2}$}
\For{$i = 1, \cdots, d$}
\State{Replace $\W^i$ with $\W(t_i)^i$}
\EndFor
\end{algorithmic}
\label{Algo1}
\end{algorithm}

The algorithm replaces all the weight matrices of an FCNN with a Gaussian approximation given an error tolerance $\epsilon$, probability of failure $\eta$, threshold $\tau$ and variance $t$. Using this approximation, we then prove a theorem on the generalization of a compressed FCNN, compressed with this algorithm. 

\begin{theorem}
Let $f$ be an FCNN whose weight matrices are given by $\W^i \in \R^{h_i \times h_{i-1}}$, $i = \{ 1, \cdots, d\}$. Let $g$ be the compressed version of $f$, compressed with Algorithm \ref{Algo1}, with margin $\gamma$ over the training set $S$ of cardinality $m$. Let $k_i$ be the sparsity of the compressed weight matrices, $\W(t_i)^i$. Then, with high probability, the classification loss of $g$ (ignoring logarithmic factors) is bounded by:

\begin{equation}
L_0 (g) \leq \hat{L}_{\gamma}(f) + \tilde{O}\left(\sqrt{\frac{\sum_{i=1}^d k_i}{m}} \right)
\end{equation}
\label{generalization}
\end{theorem}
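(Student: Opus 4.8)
The plan is to invoke the compression theorem (Theorem~1) with the compressed network $g$ produced by Algorithm~\ref{Algo1}, so that the whole argument reduces to two tasks: identifying the effective parameter count $q$ of $g$, and verifying the margin hypothesis that $f$ and $g$ agree to within $\gamma$ on every training point. For the parameter count, the key observation is that each compressed layer is $\W(t_i)^i = \sqrt{t_i}\,\G^i + \B^i$ with $\G^i$ a \emph{fixed} i.i.d.\ Gaussian matrix; in the language of the compression framework, $\G^i$ is supplied by the shared helper string (common randomness chosen before seeing $S$) and therefore costs nothing, so the only quantities that must be stored to describe $g$ are the scalars $t_i$ and the nonzero entries of the sparse matrices $\B^i$. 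By Proposition~\ref{binomial}, for the chosen thresholds $\tau_i$ each $\B^i$ has at most $k_i$ nonzero entries with high probability, so after discretizing each stored entry to one of $r$ values (which only affects a $\log r$ factor and is absorbed into the $\tilde{O}$) we obtain $q = \sum_{i=1}^d k_i$.

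The substantive step is error propagation: I must show that, with high probability over the Gaussian draws $\G^i$, for every $\x$ in the training set $S$ the compressed network satisfies $\norm{g(\x) - f(\x)} \le \gamma$, so that Theorem~1 applies with margin $\gamma$. Following the layer-by-layer telescoping used in \cite{arora2018stronger}, I would write the output discrepancy as a sum of $d$ terms, the $i$-th being the error injected by replacing $\W^i$ with $\W(t_i)^i$ on the (compressed) activation entering layer $i$ and then pushed forward through the remaining layers. Since the ReLU nonlinearity is $1$-Lipschitz, each such term is controlled once $\m{u}^T\overline{\W}^i\m{v}$ is small for the relevant readout direction $\m{u}$ and activation $\m{v}$, and Proposition~\ref{sherpa3} (equivalently, the guarantee of Algorithm~\ref{Algo1} with $\lambda_i = 2\log(3/\eta_i)/\epsilon_i^2$) gives exactly $|\m{u}^T\overline{\W}^i\m{v}| \le \epsilon_i \norm{\m{u}}\norm{\m{v}}$ with probability $1-\eta_i$ for a fixed pair. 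To upgrade this to all training points simultaneously I take a union bound over the $m$ samples and $d$ layers, and where the readout direction or activation is itself data dependent, over a sufficiently fine $\epsilon$-net of the unit sphere (or, alternatively, condition on the activations of earlier, already-fixed layers), which forces $\eta_i$ to be polynomially small in $m$, $d$ and the relevant network norms; since $\eta_i$ enters the sparsity only through $\log(3/\eta_i)$, the effect is again logarithmic. Choosing each $\epsilon_i$ below $\gamma$ divided by the appropriate product of downstream operator norms / Lipschitz constants then makes the total error at most $\gamma$.

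Combining the two pieces: with $q = \sum_i k_i$ and the margin hypothesis verified, Theorem~1 yields $L_0(g) \le \hat{L}_\gamma(f) + O\!\big(\sqrt{q \log r / m}\,\big) = \hat{L}_\gamma(f) + \tilde{O}\!\big(\sqrt{\sum_{i=1}^d k_i / m}\,\big)$, which is the claimed bound. The "high probability" in the statement is simply the intersection of the events that every $\B^i$ is $k_i$-sparse (Proposition~\ref{binomial}) and that the per-layer approximation holds for all training points (Proposition~\ref{sherpa3} together with the union bound), each made $1-o(1)$ by the choice of parameters.

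I expect the main obstacle to be the error-propagation bookkeeping rather than the plug-in: Proposition~\ref{sherpa3} is a guarantee for \emph{fixed} $\m{u},\m{v}$, whereas in a multi-layer network the activation feeding layer $i$ is a random function of the earlier compressions, so one must either decouple these dependencies (conditioning, or resampling layer by layer) or pay for a net over all reachable activation directions, and then check that the resulting per-layer tolerances $\epsilon_i$ can be met without inflating the Gaussian variances $t_i$ — which, through the relation $\lambda_i = 2/(\tau_i^2+t_i)$, would force $\tau_i$ down and hence $k_i$ up. Making these constants mutually consistent (small $\epsilon_i,\eta_i$ for accuracy while keeping $\tau_i$ large enough that $\sum_i k_i$ stays small) is the delicate part of the proof; everything else is the standard compression-framework argument.
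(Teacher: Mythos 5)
Your overall strategy is the paper's strategy: count only the $k_i$ nonzero entries of the sparse parts as the compressed parameter budget (Proposition \ref{binomial}), show the compressed network tracks $f$ to within the margin on $S$ via a layer-by-layer propagation of the per-layer guarantee of Proposition \ref{sherpa3}, and then feed $q=\sum_i k_i$ into the compression framework. Your observation that the Gaussian matrices $\G^i$ can be charged to the shared helper string is a reasonable (and arguably cleaner) justification for a step the paper leaves implicit, since in Appendix \ref{appendixC} only the $k_i$ entries are discretized ($\norm{\hat{\W}^i-\W(t)^i}_F\leq\sqrt{k_i}\,\nu$). Where you diverge is in the two substantive steps. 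First, the paper does not invoke Theorem 1 as a black box: it re-derives the bound by bounding $\hat{L}_0(g)$ by $\hat{L}_\gamma(f)$ (Lemma \ref{loss}), then computing a covering number with the perturbation bound of Lemma \ref{perturbation} and finishing with the Dudley entropy integral. Second, and more importantly, your error propagation sets $\epsilon_i$ to $\gamma$ divided by products of downstream operator norms and controls data-dependent directions with $\epsilon$-nets of the sphere, whereas the paper's Lemma \ref{induction} runs an induction using the four data-dependent quantities (layer cushion, interlayer cushion, activation contraction, interlayer smoothness) and union-bounds only over the fixed set $G=\{(\J^{ij}_{\x^i},\x^i):\x\in S,\ j\geq i\}$ of size at most $dm$, handling the perturbed activations through the induction hypothesis and interlayer smoothness rather than through a net.

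This difference is not cosmetic, and it is where your sketch has a genuine soft spot that you flag but do not resolve. A net over readout directions on the unit sphere in dimension $h$ has cardinality exponential in $h$, so the union bound forces $\eta_i$ exponentially small in the width; since the algorithm ties the failure probability to the cutoffs through $\lambda_i = 2\log(3/\eta_i)/\epsilon_i^2 = 2/(\tau_i^2+t_i)$, this drives $\tau_i\sim\epsilon_i/\sqrt{h}$, and by Proposition \ref{binomial} the sparsity $k_i\sim n\,w_m^\alpha\tau_i^{-\alpha}$ can then grow to the order of the full matrix, defeating the point of the compression. Similarly, taking $\epsilon_i\propto\gamma/\prod_j\norm{\W^j}_2$ reintroduces exactly the exponential-in-depth product that the cushion quantities are designed to replace; it is formally swallowed by the $\tilde{O}(\cdot)$ as the theorem is stated, but it again feeds back into $\tau_i$ and hence $k_i$, so the resulting bound need not be non-vacuous. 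The paper's resolution is precisely the conditioning route you mention in passing: the pairs in $G$ are objects of the \emph{original} network (hence fixed before the Gaussian draws), only $dm$ of them are needed, and the per-layer tolerance $\epsilon_i=\epsilon\mu_i\mu_{i\rightarrow}/(6cd)$ is expressed in the cushion/contraction/smoothness constants rather than operator-norm products. To turn your proposal into the paper's proof you would replace the net argument with this fixed-set union bound and adopt the induction of Lemma \ref{induction}, then either cite the covering-number/Dudley computation or reproduce it as in Appendix \ref{appendixC}.
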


The proof strategy is the same strategy outlined in \cite{arora2018stronger}, which bounds the error of the approximation using an induction argument. The full proof of this theorem is given in Appendix \ref{appendixC}. 

\begin{remark}
\normalfont It should be evident that the theorem favors matrices that have a higher value of $\alpha$ (i.e. matrices with fewer non-zero matrix elements). We note that the empirical results in the work of \cite{martin2020heavy} measures the heavy-tailedness of the eigenvalue distribution of the correlation matrix of the weights, $\W\W^T$, rather than the matrix elements of $\W$ itself as we have done here. Their work finds empirical correlation of heavier tails in the eigenvalue distribution with test set accuracy. The relationship between the underlying probability distribution of the matrix elements of $\W$ and the eigenvalues of $\W\W^T$ is an area requiring further theoretical exploration. Another work which studied the matrix elements of $\W$ itself (\cite{fortuin2021bayesian}) indicates that the matrix elements are more heavy-tailed than a Gaussian, but is well fit by a Laplace distribution, which is consistent with the bound here. They also find that later layers become progressively more heavy-tailed. We discuss this in the following section. 
\end{remark}

\begin{remark}
\normalfont It is also noted that the proof of Theorem \ref{generalization} did not require much deviation  of the proof techniques contained in \cite{arora2018stronger}. The four constants utilized in the proof of Theorem \ref{generalization} may be artifacts of the proof technique. A potential area of future work would be to develop new proof techniques to remove some or all of these constants. In the original work of \cite{arora2018stronger}, it was found that of these four constants, the layer cushion was the best indicator of generalization. The inverse of the layer cushion $\frac{1}{\mu_i}$ (Appendix \ref{appendixC}) is lower-bounded by the square root of the stable-rank of $\W$. It was observed in \cite{arora2018stronger} that a larger layer cushion improved generalization (i.e. smaller inverse layer cushion). The relationship between the stable-rank and power-laws will be discussed in the next section. %It is also thought that an analysis of the action of the power-law distributed weight matrices may reveal a way to perform the induction proof without use of these constants, and may reveal a more nuanced interplay between heavy and light tails.  
\end{remark}

\begin{remark}
\normalfont In our proof we have also assumed that the matrix elements are drawn i.i.d. For the fully trained weight matrices of a neural-network, this assumption is likely to be unrealistic. Heuristically, this approximation might be valid if we view the high components as the correlated ones (which we keep through the approximation), and the low components as approximately i.i.d. Relaxing the i.i.d. assumption and examining models of correlation between the weights is an important potential area of future work. 

%As the matrix elements become heavier-tailed, it is thought that the network becomes more confident in its output, and more of the matrix elements can be replaced without affecting the matrix action. For further work, we would like to analyze the action of these matrices, and further analyze this interplay between light and heavy-tailed behavior in the weight matrices.
\end{remark}

\section{Discussion and Analysis}

Given the three remarks above, we wish to analyze the action of the matrix $\W$ on $\x$, i.e. $\W \x$. Given the empirical evidence that early layers are light-tailed and the later layers become heavy-tailed in the work of \cite{fortuin2021bayesian}, we give a heuristic analysis on how 1) early layers with lighter tails (large $\alpha$) may be acting as compression layers, and that 2) later layers with heavier tails (small $\alpha$) may be indicative of \emph{resilient classification}, which we will define in it's eponymous section. We will also examine 3) how heavy-tailed matrices relate to the stable rank of a matrix. In this section, we will assume a cross-entropy loss with a softmax function after the final layer.

\subsection{Spiked Johnson-Lindenstrauss Compression}

In this section, we wish to show how the earlier layers of an FCNN may be acting as compression layers. As indicated above, the original weight matrix $\W$ with heavy-tailed matrix elements can be approximated by the matrix $\W(t) \equiv \sqrt{t} \G + \B$. We can view this as an approximate \emph{Johnson-Lindenstrauss} compression (\cite{johnson1984extensions}) where $\sqrt{t}\G$ acts as a compression matrix while $\B$ can be seen as adding sparse component spikes of  signal. We wish to examine $\E[\norm{\B \x}^2_2]$. We examine a single component of the resulting vector:

\begin{gather}
\E \left[\left[\B \x \right]^2_i \right] = \E \left[\left(\sum_{j=1}^{N_2} \B_{i,j} \x_j \right)^2\right] = \E \left[ \sum_ {j, j'}\B_{i,j} \B_{i,j'}  \x_{j'} \x_j\right]\\
= \E \left[ \sum_{i} \B^2_{i,i} \x^2_i\right] = N_2 \E \left[ \int_{\tau}^{\infty} w^2(\alpha w_m^{\alpha}) w^{-(\alpha + 1)} dw \right]\cdot \x^2_i = N_2 \frac{\tau^{-(\alpha - 2) }(\alpha w_m^{\alpha})}{\alpha - 2} \cdot \x^2_i
\end{gather}

Thus, in expectation, each component of $\B \x$ will add a power-law factor in $\W (t)\x$. To refine this analysis, rather than computing the expected value, which can smooth the effects of the large matrix elements, we can also consider whether a given row of $\B$ (and hence component of $\B\x$) will contain a large matrix element using the same bounding argument as Proposition \ref{binomial}. Thus the probability that a given row of $\B$ will be $k$-sparse is bounded by:

\begin{equation}
\pr(X \geq k)  \leq \left(\frac{N_2w_m^{\alpha}\tau^{-\alpha}}{k}\right)^k \left(\frac{1-w_m^{\alpha}\tau^{-\alpha}}{1-\frac {k} {N_2}}\right)^{N_2-k}
\end{equation}

These ``spikes'' will be added the the compressed $\G \x$. 

\subsection{Resilient Classification}

It was found in the work of \cite{fortuin2021bayesian} that later layers in the network become progressively more heavy-tailed. We give a heuristic analysis of what benefits heavy-tailed behavior in the later layers may potentially give. We begin by defining a notion of \emph{resilient classification}. For a network with cross-entropy loss with a preceding softmax function, the cross-entropy loss is minimized when the softmax output for the true label $e^{z_i} \rightarrow \infty$, and the un-true labels $e^{z_j} \rightarrow 0$ for $j \neq i$.  By way of Proposition \ref{sherpa3}, with a shrinking value of $\alpha$, we can replace a growing number of the matrix elements with a Gaussian random variable without affecting the classification result. We refer to this phenomenon as \emph{resilient classification}. The following analysis is inspired by the spirit of the numerical renormalization group \cite{wilson1975renormalization}. 

By way of the aforementioned illustration, suppose $\W$ is the fully trained weight matrix of a neural network where the matrix elements $\W_{i,j}$ are drawn from an $\alpha$-stable distribution which behaves asymptotically like a power-law. Along with the stability parameter $\alpha$, they are parameterized by a scale factor $c$, skewness parameter $\beta$ (which measures the asymmetry), and shift parameter $\delta$ (which is roughly analogous to the mean). For simplicity, we assume $\beta =  \delta = 0$. The asymptotic behavior is given by (\cite{nolan2009univariate}):
 
 \begin{equation}
\pr(w | \alpha, \beta = 0, c, \delta = 0) \sim \alpha c^\alpha c_\alpha w^{-(\alpha + 1)}
 \end{equation}
 
 where $c_{\alpha} = \sin(\frac{\pi \alpha}{2})\Gamma(\alpha)/\pi$. Suppose we separate the matrix elements of $\W$ into powers of the scale factor $c$, i.e.:
 
 \begin{equation}
\W = \sum_{i=0}^M \C^i + \D
 \end{equation}
 
Where  $\C^i_{k,l} = \W_{k,l}\one[c^i < \W_{k,l} < c^{i+1}]$ and $\D_{k,l} = \W_{k,l}\one[c^{M+1} < \W_{k,l}]$. For large enough $c^i$, the probability of a given $\C^i_{k,l}$ being non-zero is given by:

\begin{equation}
p = c_{\alpha}\frac{1}{c^{\alpha (i-1)}} \left( 1 -  c^{-\alpha}\right) 
\end{equation}

Which is given by an integration between the bounds. For a given row, we can bound the probability that the row has a matrix element in a given ``power bracket" of $c$ with a binomial bound, as in Proposition \ref{binomial}. Suppose $p$ is as defined above. Let $\kappa$ be the number of non-zero elements, and the total number of elements in the row be $N$. Let $X$ be the number of non-zero elements. For $\kappa > Np$: 

\begin{equation}
\pr(X \geq \kappa)  \leq \left(\frac{Np}{\kappa}\right)^\kappa \left(\frac{1-p}{1-\frac \kappa N}\right)^{N-\kappa}
\end{equation}

We wish to analyze the action of $\W$ on $\x$ at the output layer, where for classification tasks a softmax function is typically applied. Suppose $\z = \W \x $. We pass this through a softmax function, $\text{softmax}(\z_i) = \frac{e^{\z_i}}{\sum_{i}e^{\z_i}}$.

\begin{figure}[ht]
    \centering
    \includegraphics[scale=0.50]{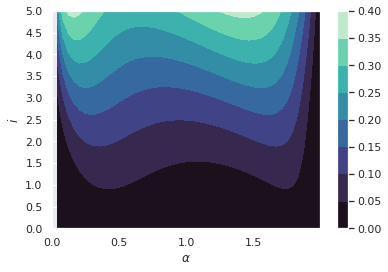}
    \caption{Contour plot of the probability as a function of $\alpha$ and $i$. The parameters of $c= 1.3, M=5, N=64$ are chosen based on the fully trained weight matrix of a ResNet. A lighter color indicates higher probability.}
    \label{contour}
\end{figure}

As stated above, the cross-entropy loss is minimized when the true label $\z_i \rightarrow \infty$ and the un-true labels $j \neq i, \z_j \rightarrow -\infty \, (e^{\z_j}\rightarrow 0)$. Suppose $\z_i$ of the true label is of magnitude $c^M$. One way to reach the value of $c^M$ is to have $c^{M - i}$ elements of magnitude $c^i$ in a given row (if we assume that the features are of order 1). For $\kappa = c^{M-i} > Np = Nc_{\alpha}\frac{1}{c^{\alpha (i-1)}} \left( 1 -  c^{-\alpha}\right) $:

\begin{equation}
\pr(X \geq c^{M-i})  \leq \left(\frac{Nc_{\alpha}\frac{1}{c^{\alpha (i-1)}} \left( 1 -  c^{-\alpha}\right) }{c^{M-i}}\right)^{(c^{M-i})} \left(\frac{1-c_{\alpha}\frac{1}{c^{\alpha (i-1)}} \left( 1 -  c^{-\alpha}\right) }{1-\frac{c^{M-i}} N}\right)^{N-c^{M-i}}
\end{equation}

As $\alpha$ grows smaller, we can replace more of the matrix elements smaller than some cut-off power $c^{\tau}$, while still leaving many ``paths" to attain $c^M$ in a given coordinate. In Figure \ref{contour}, we plot the probability as a function of $\alpha$ and $i$ for $c = 1.3, M = 5, N = 64$. For a fixed value of $\alpha$, we see that there are large regions of $c^{M-i}$ that are probable.

 %with characteristic function given by:
 
 %\begin{equation}
 %   \phi(t; \alpha, \beta, c, \mu) \equiv \exp(it\mu - |ct|^{\alpha}(1 - i \beta\text{sgn}(t)\Phi))
 %\end{equation}
 
 %Where $\alpha$ is the stability parameter, $\beta$ is the skewness parameter, $c$ is the scale parameter, and $\mu$ is a shift parameter. For simplicity, suppose $\mu, \beta = 0$. 

 \subsection{Stable Rank and Power-laws}

The stable rank of a matrix $\M$ is defined as $\text{srank}(\M) \equiv \frac{\norm{\M}^2_F}{\norm{\M}^2_2}$, where $\norm{\cdot}_F$ is the Frobenius norm and $\norm{\cdot}_2$ is the spectral norm. In the work of \cite{arora2018stronger}, of the four constants used in the induction proof, it was found that the layer cushion, $\mu_i$, was the most correlated with generalization.  The layer cushion was defined to be the largest number $\mu_i$ such that:
 
 \begin{equation}
 \mu_i \leq \frac{\norm{\W^i \phi(\x^{i-1})}}{\norm{\W}_F\norm{\phi(\x^{i-1})}}
 \label{layer-cushion}
 \end{equation}
 
where $\W^i$ is the weight matrix of layer $i$, $\phi$ is the activation function, and $\x^{i-1}$ is the output of the $(i-1)$-th layer before the activation function. It was found that a larger layer cushion (smaller inverse layer cushion) was correlated with generalization. From equation \ref{layer-cushion}, we can infer that  the inverse layer cushion is lower-bounded by $\sqrt{\text{srank}(\W)} \leq \frac{1}{\mu_i}$. In general, we know that the Frobenius norm bounds the spectral norm $\norm{\W}_2 \leq \norm{\W}_F$. 
 
 In the work of \cite{rebrova2018spectral}, lower bounds on the spectral norm of a heavy-tailed random matrix (with matrix elements with diverging variance) are given. A rigorous treatment of the relationship between the stable rank and power-law distributed matrix elements remains an open question. Heuristically, as the power-law exponent $\alpha$ grows smaller, the Frobenius norm of a matrix with heavy-tailed matrix elements will grow, as will the spectral norm. In Figure \ref{stable-rank}, we numerically compute the stable-rank as a function of $\alpha$ for the weight matrices of several architectures. The spectral norm of the weight matrix is computed through power iteration. The fit of the matrix elements to a power-law distribution with exponent $\alpha$ is done with the \texttt{powerlaw} package \cite{alstott2014powerlaw}, which uses a maximum-likelihood fit. The full details of the computation and list of architectures is given in Appendix \ref{appendixD}. In Figure \ref {stable-rank}, we see there is a general clustering of smaller $\alpha$ values with smaller stable rank, which is fit to a mixture of linear regressors using expectation maximization. 
 
\begin{figure}
    \centering
    \includegraphics[scale=0.55]{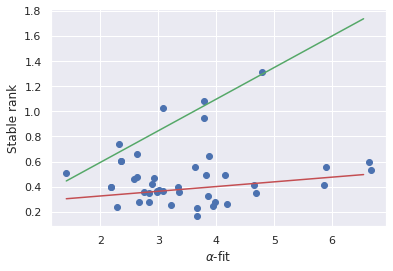}
    \caption{Stable rank of the fully trained weight matrices of a neural network as a function of $\alpha$. We see a general clustering of a lower value of $\alpha$ with a smaller stable rank, which is fit to a mixture of linear regressors using expectation maximization.}
    \label{stable-rank}
\end{figure}
 
 %Will the rate that the spectral norm grows be faster than the rate the Frobenius norm grows, with smaller $\alpha$? What happens when the matrix elements are not i.i.d. but correlated? We consider these potential areas of future work. 

\section{Empirical Evaluation}
 
In this section, the matrix compression algorithm is evaluated on several pre-trained neural network architectures, and the resulting accuracy of the compressed network will be compared to the original network. In Table \ref{accuracy}, we have a table of the model accuracy before and after the matrix compression of the final dense matrix, along with an $\alpha$-fit of the matrix elements (using the \texttt{powerlaw} package), as well as the dataset that the model was trained on. Compression for each model is performed 10 times and averaged, with standard deviation computed from the 10 trials. The full experimental details are given in Appendix \ref{appendixE}. As apparent from the table, compressing the final dense layer results in an accuracy comparable to the original model. 

\begin{table}[h]
  \caption{Model Accuracy Pre- and Post-Compression}
  \label{sample-table}
  \centering
  \begin{tabular}{lllll}
    \toprule
    \multicolumn{2}{c}{Models}                   \\
    \cmidrule(r){1-2}
    Architecture     & Dataset & $\alpha$-fit     & Original Accuracy & Compressed Accuracy \\
    \midrule
    densenet40\_k12\_cifar10 \cite{huang2017densely} & CIFAR10 & 2.84 & 94.46\% & 94.26 $\pm$ 0.09\\
    resnet20\_cifar10 \cite{he2016deep} & CIFAR10  & 3.22  & 93.90\% & 89.19 $\pm$3.03   \\
    resnet56\_cifar10 & CIFAR10 & 3.02  & 95.32\% & 94.74 $\pm$0.30  \\
    resnet110\_cifar10 & CIFAR10 & 2.98  & 96.26\% & 95.63 $\pm$ 0.63   \\
    resnet164bn\_cifar10 & CIFAR10 & 4.65  & 96.24\% & 95.78 $\pm$ 0.15   \\
    resnet272bn\_cifar10 & CIFAR10 & 2.89  & 96.63\% & 96.51 $\pm$ 0.06   \\
    densenet40\_k12\_svhn & SVHN & 1.39 & 96.99\%  & 96.99 $\pm$ 0.02\\
    xdensenet40\_2\_k24\_bc\_svhn & SVHN & 2.99 & 97.08\% & 96.97 $\pm$ 0.05 \\
    resnet20\_svhn & SVHN & 3.34 & 96.60\%  & 94.60 $\pm$ 0.71 \\
    resnet56\_svhn & SVHN & 2.63 & 97.19\%  & 96.58 $\pm$ 0.39 \\
    resnet110\_svhn & SVHN & 2.33 & 97.48\%  & 97.16 $\pm$0.14 \\
    pyramidnet110\_a48\_svhn \cite{han2017deep} & SVHN & 3.64 & 97.56\% & 97.28 $\pm$ 0.11 \\
    \bottomrule
  \end{tabular}
  \label{accuracy}
\end{table}

\section{Conclusion and Future Work}

In this work, we have shown how matrices with heavy-tailed matrix elements can be compressed using the compression framework, resulting in a generalization bound which counts the number of non-zero matrix elements in sparse matrices. In addition, we have analyzed how the early layers in an FCNN may be acting as compression layers, and how later layers may be building \emph{resilient classification}. We have also discussed the relationship between the stable-rank of a matrix and matrix-elements that are heavy-tailed. Finally, we have provided empirical evidence that the matrix approximation algorithm results in networks with comparable accuracy. 

For future work, an important direction is relaxing the i.i.d. assumption of the matrix elements of the trained weight matrix, and building models of correlation which recover the power-law behavior in the matrix elements. In connection with this direction, the relationship between the eigenvalue distribution of the covariance matrix of the weights and the matrix elements themselves can be further explored theoretically. Finally, the relationship between the stable-rank of a matrix and power-law distributed matrix elements could also be rigorously established. 

\section{Acknowledgements}
This work was supported by the \emph{David K.A. Mordecai \& Samantha Kappagoda Charitable Trust Graduate Student Research Fellowship}. This research was conducted in residence at the \emph{RiskEcon® Lab, Courant Institute of Mathematical Sciences NYU} with funding from \emph{Numerati® Partners LLC}. The author would like to thank David K.A. Mordecai for extensive comments throughout the manuscript. The author would also like to thank Joseph P. Aylett-Bullock for comments on an early version of the manuscript. The author also thanks Charles Martin and Michael Mahoney for discussion of their work.

%It is evident that the picture is incomplete, but we hope that the ideas and analysis contained in this work can be built upon to reveal the relevance of heavy-tail distributed weight matrices towards generalization. 

 %Suppose we have a fully-connected neural network with $d$ layers, where the weight matrix for each layer is given by $\W^i$ where $i \in \{ 1, \cdots, d\}$. Let $\phi$ be the activation function. For each weight matrix, let the dimension of each be given by $\W^i \in \R^{h_i \times h_{i-1}}$. Where $h^0 \in \R^d$ is the input dimension. Let $\m{x}^i$ denote the output of the $i$-th layer before the activation function, that is: $\W^i \phi(\m{x}^{i-1})$. Let $h = \max_{i=1}^L h_i$ be the maximum hidden dimension. Let $\m{J}$ be the Jacobian of the full neural network, and $\m{J}^{ij}$ be the interlayer Jacobian from layer $i$ to layer $j$. Likewise, let $\m{M}^{ij}$ be the composition of the layer $i$ to layer $j$.
 
 %\section{Discussion}
 
 %Next, we discuss the interplay between the parameters $\epsilon$, $\eta$, $\tau$, and $t$ used in the approximations. 
 
%\section{Empirical Validation}

%Finally, we empirically validate that the accuracy of a trained neural network when the dense matrices of the network are compressed using the methodology outlined above.  

%\section*{Acknowledgements}

%The author would like to thank David K.A. Mordecai and Samantha Kappagoda for making this research possible. 

\bibliography{main}
\bibliographystyle{unsrt}

\appendix
\section{Bounding the Error}
\label{appendixA}
Let $\W$ be a matrix of of arbitrary dimension $N_1 \times N_2 $ , where each matrix element is drawn $\W_{i,j} \sim (\alpha w_m^{\alpha})w^{-\alpha}, i \in \{1, \cdots, N_1\}, j \in \{1, \cdots N_2\}$ from a power-law distribution with support $w \in [w_m, \infty)$, where $w_m > 0$. Suppose for some threshold $\tau > w_m$, we separate the matrix elements less than or equal to $\tau$ in a matrix $\m{A}$ with the same index that it had in $\W$, and likewise the matrix elements greater than $\tau$ we separate into the matrix $\B$ at the same index that it had in $\W$. Thus, $\W = \A + \B$. We wish to replace the matrix $\A$ with a matrix $\sqrt{t}\m{G}$, a matrix with i.i.d normal distribution entries scaled by $\sqrt{t}$. We define the matrix $\W(t) = \sqrt{t}\G + \B$. We wish to track the error introduced by replacing $\W$ with $\W(t)$ in a neural network. Let $\overline{\W} = \W - \W(t) = \A - \sqrt{t}\G$. 

\begin{proposition}
For any $\m{u} \in \R^{N_1}, \m{v} \in \R^{N_2}$, with probability $1 - 3\exp\left(- \frac{s^2}{2\norm{\m{uv}^T}^2_F(\tau^2+t)}\right)$, \\
$$\left| \m{u}^T \overline{\W} \m{v} \right| < s$$
\label{sherpa}
\end{proposition}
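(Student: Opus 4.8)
The plan is to view $\m{u}^T\overline{\W}\m{v}$ as a weighted sum of independent scalar random variables and apply a sub‑Gaussian (Chernoff) tail bound. Expanding,
\[
\m{u}^T\overline{\W}\m{v}\;=\;\sum_{i=1}^{N_1}\sum_{j=1}^{N_2}u_i v_j\,\overline{\W}_{ij},\qquad \overline{\W}_{ij}=\A_{ij}-\sqrt{t}\,\G_{ij},
\]
and since the $\W_{ij}$ are i.i.d.\ while the $\G_{ij}$ are i.i.d.\ standard normals independent of $\W$, the terms $\{u_i v_j\overline{\W}_{ij}\}_{i,j}$ are mutually independent. Each $\overline{\W}_{ij}$ is the difference of a bounded variable $\A_{ij}\in[0,\tau]$ and an independent $\mathcal N(0,t)$ variable, so the first step is a moment‑generating‑function estimate for a single entry.

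For the Gaussian part, $\E[e^{-\lambda\sqrt t\,\G_{ij}}]=e^{\lambda^2 t/2}$ exactly. For the bounded part, Hoeffding's lemma applied to $\A_{ij}$ viewed (crudely but legitimately) as supported in $[-\tau,\tau]$ gives $\E\big[e^{\lambda(\A_{ij}-\E\A_{ij})}\big]\le e^{\lambda^2\tau^2/2}$, i.e.\ a sub‑Gaussian proxy $\tau^2$; independence makes the two MGFs multiply, so $\overline{\W}_{ij}$ — once recentred by $\E\A_{ij}$ — is sub‑Gaussian with proxy $\tau^2+t$. Scaling by $u_i v_j$ and summing the independent contributions replaces the proxy by $(\tau^2+t)\sum_{i,j}u_i^2 v_j^2=(\tau^2+t)\norm{\m u}^2\norm{\m v}^2=(\tau^2+t)\norm{\m u\m v^T}^2_F$. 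The two‑sided Chernoff bound for a sub‑Gaussian variable then yields $\pr\{|\m u^T\overline{\W}\m v|\ge s\}\le 2\exp\!\big(-s^2/(2(\tau^2+t)\norm{\m u\m v^T}^2_F)\big)$; the constant $3$ in the statement arises from instead splitting the estimate into the tail of $\sqrt t\,\m u^T\G\m v$ and the tail of the bounded term $\m u^T\A\m v$ and taking a union bound over the resulting tail events, which is the form that feeds cleanly into Algorithm~\ref{alg:algorithm1} and Proposition~\ref{sherpa3}.

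The step I expect to be the real obstacle is the treatment of the bounded term $\A_{ij}$, which is \emph{not} centred: $\E[\m u^T\A\m v]=\E[\A_{11}]\,(\sum_i u_i)(\sum_j v_j)$, and for the power law on $[w_m,\infty)$ this mean is of order $w_m$ per entry and need not be dominated by $s$. A fully rigorous argument therefore either states the bound for the recentred quantity $\m u^T\overline{\W}\m v-\E[\A_{11}](\sum_i u_i)(\sum_j v_j)$, or invokes an additional smallness assumption (on $w_m$, or on the vectors involved) absorbing the mean contribution into $s$. Apart from this point the proof is routine measure concentration; the one genuinely load‑bearing feature is the additive structure $\tau^2+t$ in the variance proxy, since it is exactly this combination that couples the approximation error $\epsilon$ to the failure probability $\eta$ in the downstream results.
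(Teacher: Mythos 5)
Your route --- treat $\m{u}^T\overline{\W}\m{v}$ as a weighted sum of independent entries, recentre $\A_{i,j}$, apply Hoeffding's lemma to get a sub-Gaussian proxy $\tau^2$ for the bounded part and $t$ for the Gaussian part, then Chernoff --- is a legitimate concentration argument, but it is not the paper's argument, and the two differ exactly at the point you flag. The paper never recentres $\A$: inside the MGF it replaces each random $\A_{i,j}$ by the deterministic threshold $\tau$ (its ``Threshold'' step), uses symmetry to flip the sign of the Gaussian contribution, evaluates the Gaussian MGF exactly, and then absorbs the resulting deterministic, linear-in-$\lambda$ term via the elementary inequality $e^{x}\leq e^{1+x^{2}/2}$; after optimizing $\lambda$ this yields $\exp\left(1-\frac{s^{2}}{2\norm{\m{u}\m{v}^T}_F^{2}(\tau^{2}+t)}\right)$, and the constant $3$ is simply the factor $e\leq 3$ pulled out of the exponent --- not a union bound over the Gaussian tail and the bounded-term tail, as you conjectured. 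What your version buys is a clean two-sided sub-Gaussian bound (with constant $2$) for the \emph{recentred} quantity; what the paper's device buys is a bound stated for the literal uncentred $\m{u}^T\overline{\W}\m{v}$, at the cost of the extra factor $e$ and of the threshold manipulation.

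Your concern about the uncentred mean is genuine and should not be waved away: $\E[\m{u}^T\A\m{v}]=\E[\A_{1,1}]\left(\sum_i \m{u}_i\right)\left(\sum_j \m{v}_j\right)$ can exceed $s\approx\epsilon\norm{\m{u}}\norm{\m{v}}$ for aligned vectors, so a fully rigorous proof needs either recentring or an extra assumption, exactly as you say. The paper's way around it --- bounding $\A_{i,j}\leq\tau$ termwise in the exponent and invoking $e^{x}\leq e^{1+x^{2}/2}$ --- carries its own caveats (the threshold replacement increases the MGF only when the corresponding $\lambda\m{u}_i\m{v}_j$ is nonnegative, and applying the quadratic bound so as to obtain $\sum_{i,j}\m{u}_i^{2}\m{v}_j^{2}\tau^{2}$ rather than $\left(\sum_{i,j}\m{u}_i\m{v}_j\right)^{2}\tau^{2}$ is delicate), so your proposal does not fail where the paper succeeds; rather, you have isolated the same weak point and proposed the more standard repair (state the bound for the recentred quantity), while the paper trades rigour at that step for a statement about the uncentred error with the constant $3$ coming from pulling out $e$.
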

\begin{proof}
We know that $\A_{i,j} = \W_{i,j}\one(\W_{i,j} < \tau)$. Let $\A_{i,j}$ be the $i,j$-th matrix element of $\A$, and $\G_{i,j}$ be the $i,j$-th matrix element of $\G$. Let $\m{v}_i$ and $\m{u}_i$ be the $i$th component of the vectors $\m{v}$ and $\m{u}$ respectively.
\begin{gather}
\pr \left\{\m{u}^T \overline{\W} \m{v} > s \right\} = \pr \left\{\m{u}^T (\A - \sqrt{t} \G)\m{v} > s\right\} \\
= \pr \Big\{ \m{u}_1\left[(\A_{1,1}- \sqrt{t}\G_{1,1})\m{v}_1 + \cdots (\A_{1 N_2}- \sqrt{t}\G_{1, N_2})\m{v}_{N_2} \right] + \cdots + \\
\m{u}_{N_1}\left[(\A_{N_1,1} - \sqrt{t}\G_{N_1,1})\m{v}_1 + \cdots + (\A_{N_1,N_2} - \sqrt{t}\G_{N_1 ,N_2})\m{v}_{N_2} >s \right] \Big\} \\
\begin{align}
 &= \pr \left\{ \sum_{ij}\m{u}_i (\A_{ij} - \sqrt{t}\G_{ij})\m{v}_j > s \right\} \\
 &\leq \frac{\E\left[  \sum_{i,j}\m{u}_i (\A_{i,j} - \sqrt{t}\G_{i,j})\m{v}_j\right]}{s} \text{ (Markov).}\\
 &\leq \frac{\E\left[ \exp\left( \lambda \sum_{i,j}\m{u}_i (\A_{i,j} - \sqrt{t}\G_{i,j})\m{v}_j \right)\right]}{e^{\lambda s}} \text{ (Convexity, Jensen's).}\\
\end{align}
\end{gather}

For ease of notation, let $a_{ij} = \m{u}_i \m{v}_j$ and $g_{ij} = \sqrt{t}\m{u}_i \m{v}_j$. We focus on the right hand side.

\begin{gather}
\begin{align}
&= \frac{\E\left[ \exp\left( \lambda \sum_{i,j} a_{ij}\A_{i,j} - g_{ij}\G_{i,j} \right)\right]}{e^{\lambda s}}\\
&\leq \frac{\E\left[ \exp\left( \lambda \sum_{i,j} a_{ij}\tau - g_{ij}\G_{i,j} \right)\right]}{e^{\lambda s}} \text{ (Threshold)}\\
&\leq \frac{\E\left[ \exp\left( \lambda \sum_{i,j} a_{ij}\tau + g_{ij}\G_{i,j} \right)\right]}{e^{\lambda s}} \text{ (Symmetry)}
\end{align}
\end{gather}

We separate the exponential. The first sum is no longer random. 

\begin{equation}
= \frac{\exp\left( \lambda \sum_{i,j} a_{ij}\tau \right) \E \left[ \exp \left(  \lambda \sum_{i,j} g_{ij}\G_{i,j} \right) \right]}{e^{\lambda s}} 
\end{equation}

The expected value in the numerator is merely the moment generating function of a normal distribution. From a classical result, $\E[\exp(\lambda X)] = e^{\lambda^2/2}$ when $X \sim \N(0,1)$. We also bound the first exponential with $e^x \leq e^{1+x^2/2}, \forall x \in \R$ for convenience. 

\begin{gather}
\leq \frac{\exp\left( 1 + \lambda^2/2\sum_{i,j} a_{ij}^2 \tau^2\right)\exp( \lambda^2\sum_{i,j} g^2_{ij}/2)}{e^{\lambda s}} \\
= \exp\left(-\lambda s + 1 + \lambda^2/2 \sum_{i,j} a^2_{ij}\tau^2 + \lambda^2 g^2_{ij}/2\right)
\end{gather}

We optimize over $\lambda$. $\frac{d}{d\lambda} e^{- \lambda s + 1 + \lambda^2 a + \lambda^2 b} = (2\lambda a + 2\lambda b - s)e^{- \lambda s + \lambda^2 a + \lambda^2 b} = 0, \lambda = \frac{s}{2(b+a)}$. Let $a = 1/2\sum_{i,j} a^2_{ij}\tau^2$ and $b = 1/2 \sum_{i,j}g^2_{ij}$ for convenience. Putting this back into our expression, we have:

\begin{gather}
\begin{align}
&= \exp\left(- \frac{s^2}{2(a+b)} + 1 + \frac{s^2}{4(a+b)} \right) \\
&= \exp\left(- \frac{s^2}{4(a+b)} + 1 \right) \\
&\leq 3 \exp\left(- \frac{s^2}{4(a+b)}\right) \text{ (Pull out $e$)}\\
&= 3 \exp\left(- \frac{s^2}{2(\sum_{i,j}a^2_{ij}\tau^2 + g_{ij}^2)}\right) \text{ (Replace substitution)}\\
&= 3 \exp\left(- \frac{s^2}{2(\sum_{i,j}\m{u}^2_i\m{v}_j^2(\tau^2 + t))}\right) \text{ (Replace substitution)}\\
&= 3 \exp\left(- \frac{s^2}{2\norm{\m{u}\m{v}^T}^2_F (\tau^2 + t)}\right) \text{ (Frob. norm)}
\end{align}
\end{gather}

Thus our final statement is:

\begin{equation}
\pr \left\{\m{u}^T \overline{\W} \m{v} > s \right\} \leq 3 \exp\left(- \frac{s^2}{2\norm{\m{u}\m{v}^T}^2_F (\tau^2 + t)}\right)
\end{equation}
By symmetry, we obtain the result for the absolute value. For $s = \epsilon \norm{uv^T}_F$ and $\frac{2}{\tau^2 + t} = \frac{\log(3/\eta)}{\epsilon^2}$, we prove the proposition.
\end{proof}

\section{Sparsity}
\label{appendixB}

Let $\B_{k,l}$ be the $k,l$ matrix element of $\B$, $\B_{k,l} = \W_{k,l}\one[\W_{k,l} > \tau]$, $k \in \{ 1, \cdots, N_1\}, l \in \{1, \cdots, N_2\}$, where $\W_{k,l} \sim (\alpha w_m^{\alpha})w^{-(\alpha+1)}$ is drawn from a power law distribution with exponent $\alpha$, and  with support $w \in [w_m, \infty)$, where $w_m > 0$. The probability that any given matrix element of $\B$ is non-zero is given by:

\begin{gather}
\int_{\tau}^{\infty}p(w) dw = \int_{\tau}^{\infty}(\alpha w_m^{\alpha })w^{-(\alpha+1)} dw = w_m^{\alpha}\tau^{-\alpha}
\end{gather}

For $\alpha > 0, \tau > w_m > 0$. So with probability $ w_m^{\alpha}\tau^{-\alpha}$, a given matrix element is non-zero, and with probability $1 -  w_m^{\alpha}\tau^{-\alpha}$, the matrix element is zero. This is a Bernoulli random variable with probability $p =  w_m^{\alpha}\tau^{-\alpha}$. The probability that the number of non-zero matrix elements is less than or equal to $k$ is given by the cumulative distribution function of the binomial distribution:

\begin{equation}
\pr(X \leq k) = \sum_{i=0}^{\floor{k}}{n\choose i} p^i (1-p)^{n-i} 
\end{equation}

Where $X$ is the number of non-zero elements, $p$ is the probability of a non-zero element, and $n$ is the total trials. Suppose $n = N_1 \times N_2$. Let $k$ be our chosen sparsity condition. We have:

\begin{equation}
\pr(X \leq k) = \sum_{i=0}^{\floor{k}}{n\choose i} \left(\frac{\tau^{-(\alpha )}}{\alpha}\right)^i \left(1-\frac{\tau^{-(\alpha )}}{\alpha}\right)^{n-i} 
\end{equation}

For $k/n > p$, the right tail can be bounded with a Chernoff bound to give:

\begin{equation}
\pr( X \geq k) \leq \exp \left(-nD\left(\frac kn \middle\| p \right)\right)
\end{equation}

where $D$ is the relative entropy between the first and second distribution:

\begin{equation}
D \left( a \middle\| b \right) = (a) \log \frac ap + (1-a) \log\left(\frac{1-a}{1-p}\right)
\end{equation}

Plugging in our terms, we have:

\begin{gather}
\begin{align}
\pr( X \geq k) &\leq \exp \left(- n \left(\frac kn \log \frac{k}{np} + \left(1 - \frac kn\right)\log\left(\frac{1-k/n}{1-p}\right)\right)
\right)\\
&=\exp \left(- k \log \frac{k}{np} - \left(n - k\right)\log\left(\frac{1-k/n}{1-p}\right)\right)\\
&=\exp \left( \log \left(\left(\frac{np}{k}\right)^k \right) + \log\left( \left(\frac{1-p}{1-k/n}\right)^{n-k}\right)\right)\\
&=\exp \left( \log \left(\left(\frac{np}{k}\right)^k \left(\frac{1-p}{1-k/n}\right)^{n-k} \right)\right)\\
&=\left(\frac{np}{k}\right)^k \left(\frac{1-p}{1-k/n}\right)^{n-k}\\
&=\left(\frac{N_1 \cdot N_2 w_m^{\alpha}\tau^{-\alpha}}{k}\right)^{k} \left(\frac{1- w_m^{\alpha}\tau^{-\alpha}}{1-\frac{k}{N_1 \cdot N_2}}\right)^{N_1 \cdot N_2 -k}
\end{align}
\end{gather}

\section{Proof of Theorem 2}
\label{appendixC}

Suppose we have a fully-connected neural network with $d$ layers, where the weight matrix for each layer is indexed by $\W^i, \quad i \in \{ 1, \cdots, d\}$. Let $\phi$ be the activation function. For each weight matrix, let the dimension of each be given by $\W^i \in \R^{h_i \times h_{i-1}}$. Where $h^0 \in \R^b$ is the input dimension. Let $\m{x}^i$ denote the output of the $i$'th layer before the activation function, that is: $\W^i \phi(\m{x}^{i-1})$. Let $h = \max_{i=1}^L h_i$ be the maximum hidden dimension. Let $\m{J}$ be the Jacobian of the full neural network, and $\m{J}^{ij}$ be the interlayer Jacobian from layer $i$ to layer $j$. Likewise, let $\m{M}^{ij}$ be the composition of the layer $i$ to layer $j$. Let $m$ be the cardinality of the training dataset, $S$.

For each layer matrix $\W^i$, suppose each matrix element is drawn $\W^i_{k,l} \sim (\alpha w_m^{\alpha})w^{-(\alpha+1)}, k \in \{1, \cdots, h_i\}, l \in \{1, \cdots h_{i-1}\}$ from a power-law distribution with power law exponent $\alpha$ and with support $w \in [w_m, \infty)$, where $w_m > 0$. Suppose for some threshold $\tau_i$, we separate the matrix elements less than or equal to $\tau_i$ in a matrix $\m{A}^i$ with the same index that it had in $\W^i$, and likewise the matrix elements greater than $\tau_i$ we separate into the matrix $\B^i$ at the same index that it had in $\W^i$. Thus, $\W^i = \A^i + \B^i$. We wish to replace the matrix $\A^i$ with a matrix $\sqrt{t_i}\m{G}$, a matrix with i.i.d normal distribution entries scaled by $\sqrt{t_i}$. We define the matrix $\W(t_i)^i = \sqrt{t_i}\G + \B$. Let $\overline{\W^i} = \W^i - \W(t_i)^i = \A^i - \sqrt{t_i}\G$.

The proof strategy is largely similar to the one in \cite{arora2018stronger}. We cite the lemmas we reference, and note when we have modified them. We first prove the following lemma, which bounds the error in the approximation with a matrix on the left-hand side rather than a vector. This will be used to prove another lemma.

\begin{lemma} (\cite{arora2018stronger})
For any $0 < \delta, \epsilon \leq 1$, Let $\m{G} = \{(\m{U}^i, \m{x}^i)^m_{i=1} \}$ be a set of matrix/vector pairs of size $m$ where $\m{U} \in \R^{n \times h_i}$ and $\m{x} \in \R^{h_{i-1}}$. Let $\W$ and $\W(t)$ be weight matrices as defined above with the appropriate dimension, with $\eta  = \delta/(mn)$. With probability $1 - \delta$ we have for any $(\m{U}, \m{x}) \in \m{G}$, $\norm{\m{U}(\W-\W(t))\m{x}} \leq \epsilon \norm{\m{U}}_F\norm{\m{x}}$.
\end{lemma}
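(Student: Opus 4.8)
The plan is to reduce this matrix-on-the-left statement to the scalar bound of Proposition \ref{sherpa} (equivalently Proposition \ref{sherpa3}) applied one row at a time, and then reassemble the rows through a union bound together with the defining identity for the Frobenius norm; this mirrors the corresponding lemma in \cite{arora2018stronger}. Concretely, fix a pair $(\m{U}, \m{x}) \in \m{G}$ and write $\m{u}_1^T, \dots, \m{u}_n^T$ for the rows of $\m{U}$, so that $\norm{\m{U}\overline{\W}\m{x}}^2 = \sum_{j=1}^n (\m{u}_j^T \overline{\W}\m{x})^2$ where $\overline{\W} = \W - \W(t) = \A - \sqrt{t}\G$. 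For each fixed $j$, Proposition \ref{sherpa} applied to the fixed vectors $\m{u}_j$ and $\m{x}$, with failure probability $\eta$ (i.e. with the cutoffs calibrated so that $\tfrac{2}{\tau^2+t} = 2\tfrac{\log(3/\eta)}{\epsilon^2}$), gives $\pr\{|\m{u}_j^T \overline{\W}\m{x}| > \epsilon \norm{\m{u}_j}\norm{\m{x}}\} \leq \eta$.

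Next I would take a union bound over all $n$ rows and all $m$ pairs in $\m{G}$: the probability that $|\m{u}_j^T \overline{\W}\m{x}| > \epsilon\norm{\m{u}_j}\norm{\m{x}}$ for at least one row $j$ of at least one pair is at most $mn\,\eta = \delta$, using $\eta = \delta/(mn)$. Hence, with probability at least $1-\delta$, the scalar bound $|\m{u}_j^T \overline{\W}\m{x}| \leq \epsilon\norm{\m{u}_j}\norm{\m{x}}$ holds simultaneously for every row of every pair. The same realization of the random matrix $\overline{\W}$ is used throughout, so it is essential that the union bound is taken over the joint event across all $mn$ (row, pair) combinations; the matrices $\m{U}$ and vectors $\m{x}$ are members of the prescribed set $\m{G}$ and hence fixed, so no independence between $\overline{\W}$ and the data needs to be invoked here.

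On this good event, for any $(\m{U}, \m{x}) \in \m{G}$,
\begin{equation}
\norm{\m{U}\overline{\W}\m{x}}^2 = \sum_{j=1}^n (\m{u}_j^T \overline{\W}\m{x})^2 \leq \epsilon^2 \norm{\m{x}}^2 \sum_{j=1}^n \norm{\m{u}_j}^2 = \epsilon^2 \norm{\m{x}}^2 \norm{\m{U}}_F^2 ,
\end{equation}
and taking square roots yields $\norm{\m{U}(\W-\W(t))\m{x}} \leq \epsilon\norm{\m{U}}_F\norm{\m{x}}$, which is the claim.

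I do not expect a serious obstacle; the two points needing care are (i) keeping the calibration between $\eta$ and the cutoffs $(\tau,t)$ consistent with Proposition \ref{sherpa}, so that shrinking $\eta$ to $\delta/(mn)$ costs only a logarithmic factor $\log(3mn/\delta)$ in the implied relation for $\tau^2+t$, and (ii) ensuring the union bound is genuinely over the joint event across all $mn$ (row, pair) pairs rather than treating rows as independent. The identities $\norm{\m{U}\overline{\W}\m{x}}^2 = \sum_j (\m{u}_j^T\overline{\W}\m{x})^2$ and $\norm{\m{U}}_F^2 = \sum_j \norm{\m{u}_j}^2$ are the routine facts that glue the per-row estimates into the stated bound.
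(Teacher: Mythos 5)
Your proof is correct and takes essentially the same route as the paper's: apply the scalar bound of Proposition \ref{sherpa} row by row with $\eta = \delta/(mn)$, take a union bound over all $mn$ (row, pair) combinations, and assemble the rows via $\norm{\m{U}\overline{\W}\m{x}}^2 = \sum_j (\m{u}_j^T\overline{\W}\m{x})^2$ and $\norm{\m{U}}_F^2 = \sum_j \norm{\m{u}_j}^2$. Your added remarks on calibrating $(\tau, t)$ to the shrunken $\eta$ and on taking the union bound over the joint event are consistent with, and slightly more explicit than, the paper's terse argument.
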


\begin{proof}

Let $\m{u}_i$ be the $i$-th row of $\m{U}$. Using Proposition \ref{sherpa}, let us choose $s = \epsilon \norm{\m{u}\m{v}^T}_F =  \epsilon \norm{\m{u}}\norm{\m{v}^T}$.

For all $\m{u}_i$ we have that, 

\begin{equation}
\pr \left\{ |\m{u}_i^T\overline{\W}v|  \geq \epsilon \norm{\m{u}_i} \norm{\m{v}} \right\} \leq  \eta
\end{equation}

Since $\norm{\m{U}(\W - \W(t))\m{x}}^2 = \sum_i \norm{\m{u}_i(\W - \W(t))\m{x}}^2$ and $\norm{\m{U}}^2_F = \sum_i \norm{\m{u}_i}^2$, by union bound, we have our result.
\end{proof}

We define several the quantities we need for the next lemma, the layer cushion, the interlayer cushion (and minimal interlayer cushion), the activation contraction, and the interlayer smoothness, as defined in \cite{arora2018stronger}.

\begin{enumerate}
    \item \textbf{Layer cushion} ($\mu_i$) For any layer $i$, we define the layer cushion $\mu_i$ as the largest number such that for any $x \in S$:
    \begin{equation}
    \mu_i \norm{\W^i}_F \norm{\phi(\x^{i-1})} \leq \norm{\W^i \phi(\x^{-1})}
    \end{equation}
    \item \textbf{Interlayer cushion} ($\mu_{ij}$): For any two layers $i \leq j$, we define the interlayer cushion $\mu_{ij}$ as the largest number such that for any $x \in S$:
    \begin{equation}
    \mu_{ij}\norm{\m{J}^{ij}_{\x^i}}_F \norm{\x^i} \leq \norm{\J^{ij}_{\x^i}\x^i}
    \end{equation}
    The minimal interlayer cushion is defined:\\
    \begin{equation}
    \mu_{i \rightarrow} = \min_{i \leq j \leq d} \mu_{ij} = \min\{1/\sqrt{h^i}, \min _{i< j \leq d}\mu_{ij} \}
    \end{equation}
    \item \textbf{Activation contraction} ($c$): The activation contraction $c$ is defined as the smallest number such that for any layer $i$ and any $x \in S$:
    \begin{equation}
    \norm{\x^i} \leq c \norm{\phi(\x^i)}
    \end{equation}
    \item \textbf{Interlayer smoothness} ($\rho_{\delta}$): Interlayer smoothness is defined as the smallest number such that with probability $1 - \delta$ over noise $\etab$ for any two layers $i < j$ and any $x \in S$:
    \begin{equation}
    \norm{\m{M}^{ij}(\x^i + \etab) - \J^{ij}_{\x^i}(\x^i + \etab)} \leq \frac{\norm{\etab}\norm{\x^j}}{\rho_{\delta}\norm{\x^i}} 
    \end{equation}
\end{enumerate}

We wish to prove a new lemma which bounds the total error of replacing all the weight matrices with the Gaussian approximation. This is done through an inductive argument. 

\begin{lemma} (Modified from \cite{arora2018stronger})
For any fully connected network $f_{\W}$ with $\rho_{\delta} \geq 3L$, and any error $0 < \epsilon \leq 1$, the Gaussian approximation generates weights $\W(t)$ for a network, such that with probability $1 - \frac{\delta}{2}$ over the generated weights $\W(t)$ for any $x \in S$:

\begin{equation}
\norm{f_{\W}(\x) - f_{\W(t)}(\x)} \leq \epsilon \norm{f_{\W}(\x)}
\end{equation}
\label{induction}
\end{lemma}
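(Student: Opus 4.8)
The plan is to follow the inductive ``compression succeeds'' argument of \cite{arora2018stronger}, substituting the matrix-form concentration bound (the preceding lemma, equivalently Proposition \ref{sherpa} applied row-by-row to the left matrix) for their randomized-sketch estimate; structurally nothing else needs to change. For $0 \le i \le d$ let $\hat{\x}^i$ denote the pre-activation output of layer $i$ of the \emph{partially compressed} network in which $\W^1,\dots,\W^i$ have already been replaced by $\W(t_1)^1,\dots,\W(t_i)^i$ while $\W^{i+1},\dots,\W^d$ are still exact, so that $\hat{\x}^d$ is (up to the final nonlinearity) $f_{\W(t)}(\x)$ and $\hat{\x}^0 = \x^0 = \x$. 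I would prove by induction on $i$ that, on a good event of probability at least $1-\tfrac{\delta}{2}$, every $\x \in S$ satisfies $\norm{\hat{\x}^i - \x^i} \le \tfrac{i\eps}{d}\norm{\x^i}$; the case $i=d$ is the claimed statement after renaming $\eps$.

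The base case $i=0$ is vacuous. For the inductive step, write the layer-$(i{+}1)$ error as $\hat{\x}^{i+1} - \x^{i+1} = \overline{\W^{i+1}}\,\phi(\hat{\x}^i) + \W^{i+1}\bigl(\phi(\hat{\x}^i) - \phi(\x^i)\bigr)$ with $\overline{\W^{i+1}} = \A^{i+1} - \sqrt{t_{i+1}}\,\G$. I would \emph{not} bound either term at layer $i{+}1$ directly but track its image at the output: by interlayer smoothness (with $\rho_\delta$ as in the hypothesis) the tail map $\M^{i+1,d}$ agrees with its Jacobian $\J^{i+1,d}_{\x^{i+1}}$ up to a correction of order $\norm{\etab}\,\norm{\x^d}/(\rho_\delta\norm{\x^{i+1}})$, where $\etab$ is the accumulated perturbation, so it suffices to control $\norm{\J^{i+1,d}_{\x^{i+1}}\,(\hat{\x}^{i+1} - \x^{i+1})}$. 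The ``propagated'' piece $\norm{\J^{i+1,d}_{\x^{i+1}}\W^{i+1}(\phi(\hat{\x}^i)-\phi(\x^i))}$ is handled from the inductive hypothesis using $1$-Lipschitzness of $\phi$, the activation contraction $c$, the layer cushion $\mu_{i+1}$, and the interlayer cushion, all of which let it be re-expressed as a fraction of $\norm{\x^d}$. The ``fresh'' piece $\norm{\J^{i+1,d}_{\x^{i+1}}\,\overline{\W^{i+1}}\,\phi(\hat{\x}^i)}$ is exactly the quantity the matrix-form lemma is built for: taking the left matrix $\m{U} = \J^{i+1,d}_{\x^{i+1}}$, the vector $\phi(\hat{\x}^i)$, and the pair $(\W^{i+1},\W(t_{i+1})^{i+1})$, with per-layer failure probability $\eta_{i+1}$ of order $\delta/(dmh)$, it yields the bound $\eps_{i+1}\,\norm{\J^{i+1,d}_{\x^{i+1}}}_F\,\norm{\phi(\hat{\x}^i)}$.

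It then remains to turn the right-hand side into a multiple of $\norm{\x^d} = \norm{f_\W(\x)}$: the minimal interlayer cushion gives $\norm{\J^{i+1,d}_{\x^{i+1}}}_F \le \norm{\x^d}/(\mu_{(i+1)\rightarrow}\norm{\x^{i+1}})$, and the layer cushion together with nonexpansiveness of $\phi$ bounds $\norm{\phi(\hat{\x}^i)}$ by roughly $\norm{\x^{i+1}}/(\mu_{i+1}\norm{\W^{i+1}}_F)$, so choosing $\eps_{i+1}$ of order $\eps\,\mu_{i+1}\mu_{(i+1)\rightarrow}\norm{\W^{i+1}}_F/(cd)$ — equivalently setting $\lambda_{i+1} = 2/(\tau_{i+1}^2+t_{i+1}) = 2\log(3/\eta_{i+1})/\eps_{i+1}^2$ as in Algorithm \ref{Algo1} — makes the fresh contribution at most $\tfrac{\eps}{d}\norm{\x^{i+1}}$. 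Adding it to the $\tfrac{i\eps}{d}\norm{\x^{i+1}}$ from the propagated contribution closes the induction. Finally, a union bound over the $d$ layers, the $m$ samples in $S$, and the $n \le h$ rows of the Jacobian used inside each application of the matrix-form lemma, together with the single probability-$\delta$ interlayer-smoothness event, keeps the total failure probability below $\tfrac{\delta}{2}$.

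The main obstacle I expect is the error-propagation bookkeeping rather than any single estimate: one must verify that the vector fed into the interlayer-smoothness inequality is precisely the accumulated perturbation, that the Jacobian appearing as the left matrix of the concentration bound is the one whose Frobenius norm is controlled by the (minimal) interlayer cushion, and — most importantly — that the per-layer tolerances $\eps_i$ can be chosen so the errors telescope to $\eps$ rather than accumulating a factor growing with $d$. All of this is the delicate core of the argument in \cite{arora2018stronger}; the only genuinely new ingredient here is that the preceding lemma supplies the required ``$\norm{\m{U}\,\overline{\W}\,\m{x}} \le \eps\norm{\m{U}}_F\norm{\m{x}}$ with high probability'' guarantee for the Gaussian replacement, so that guarantee drops into their template essentially unchanged.
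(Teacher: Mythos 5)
Your overall plan is the right one --- the paper's proof is exactly the induction of \cite{arora2018stronger} with the cushions, activation contraction and interlayer smoothness, and with the matrix-form concentration lemma substituted where their Johnson--Lindenstrauss sketch guarantee used to be --- but the induction invariant you state is too weak to close the step you describe. You propose to prove only $\norm{\hat{\x}^i - \x^i} \le (i\eps/d)\norm{\x^i}$, i.e.\ control of the accumulated error at the \emph{current} layer. The paper's hypothesis is the stronger statement that after compressing the first $i$ layers the error is controlled at \emph{every} downstream layer $j \ge i$: $\norm{\hat{\x}^j_i - \x^j} \le (i\eps/d)\norm{\x^j}$. This extra strength is not cosmetic. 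In the step the paper splits $\hat{\x}^j_i - \x^j = (\hat{\x}^j_i - \hat{\x}^j_{i-1}) + (\hat{\x}^j_{i-1} - \x^j)$: the accumulated piece is disposed of \emph{directly} by the hypothesis at layer $j$, so accumulated error is never pushed forward through any operator; only the freshly introduced perturbation $\Delta^i\phi(\hat{\x}^{i-1})$ is mapped downstream, via interlayer smoothness and the Jacobian $\J^{ij}_{\x^i}$, which is where the matrix lemma and the cushions apply. Moreover, invoking interlayer smoothness requires $\norm{\W^i\phi(\hat{\x}^{i-1}) - \x^i} \le \eps\norm{\x^i}$, which is precisely the hypothesis at the intermediate layer $j=i$ (and bounding $\norm{\phi(\hat{\x}^{i-1})}$ uses it at $j=i-1$); an invariant tracked only at the current layer, or only at the output, does not supply these ingredients.

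By contrast, your decomposition $\hat{\x}^{i+1} - \x^{i+1} = \overline{\W^{i+1}}\,\phi(\hat{\x}^i) + \W^{i+1}\bigl(\phi(\hat{\x}^i) - \phi(\x^i)\bigr)$ forces you to propagate the accumulated error through $\W^{i+1}$ (or through $\J^{i+1,d}_{\x^{i+1}}\W^{i+1}$ if you push it to the output), and neither the layer nor the interlayer cushion controls these operators on the error direction $\phi(\hat{\x}^i)-\phi(\x^i)$: they only constrain them on the actual activations $\phi(\x^i)$ and $\x^{i+1}$. The generic bound $\norm{\W^{i+1}}_2\norm{\hat{\x}^i - \x^i}$, compared against $\norm{\x^{i+1}} \ge \mu_{i+1}\norm{\W^{i+1}}_F\norm{\phi(\x^i)}$, costs a per-layer factor of order $c\,\norm{\W^{i+1}}_2/\bigl(\mu_{i+1}\norm{\W^{i+1}}_F\bigr) = c/\bigl(\mu_{i+1}\sqrt{\text{srank}(\W^{i+1})}\bigr) \ge 1$ (recall $1/\mu_i \ge \sqrt{\text{srank}}$ and $c\ge 1$), so the errors do not telescope to $(i+1)\eps/d$ and can grow with $d$ --- exactly the blow-up the Arora-et-al.\ bookkeeping is designed to avoid. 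The repair is to adopt the stronger ``for all $j\ge i$'' invariant and the paper's splitting; with that change your per-layer choices ($\eps_i$ proportional to $\eps\mu_i\mu_{i\rightarrow}/(6cd)$ with $\norm{\W^i}_F\ge 1$, $\eta_i$ of order $\delta/(2d^2mh)$) and the union bound go through as you describe, and the matrix-form lemma is indeed the only genuinely new ingredient.
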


\begin{proof}
We will prove this by induction. For any layer $i \geq 0$, let $\hat{\x}^j_i$ be the output at the layer $j$ if the weights $\W^1, \cdots, \W^i$ in the first $i$ layers are replaced with $\W^1(t), \cdots, \W^i(t)$. The induction hypothesis is then given by:

Consider any layer $i \geq 0$ and any $0 < \epsilon \leq 1$. The following is true with probability $1 - \frac{i\delta}{2d}$ over $\W^1(t), \cdots, \W^i(t)$ for any $j \geq i$:

\begin{equation}
\norm{\hat{\x}^j_i -\x^j} \leq (i/d)\epsilon \norm{\x^j}
\end{equation}

For the base case $i=0$, since we are not perturbing the input, the difference is zero and hence the bound holds. Now, for the induction hypothesis, we assume the $(i-1)$ case is true, and demonstrate the $i$ case. 

\begin{gather}
\norm{\hat{\x}^j_{i-1} - \x^j} \leq \frac{(i-1)\epsilon}{d}\norm{\x^j} \text{ (Induction hypothesis).}\\
\norm{\hat{\x}^j_{i} - \x^j} \leq \frac{i\epsilon}{d}\norm{\x^j} \text{ (To show).}
\end{gather}

For each layer, we choose epsilon $\epsilon_i = \frac{\epsilon \mu_{i}\mu_{i \rightarrow}}{6cd}$ and $\eta_i = \frac{\delta}{2(dm)(h)(d)}$ (recall that we needed to choose $\eta = \frac{\delta}{m n}$ to use the lemma, where $m$ was the cardinality of the set and $n$ the first dimension of the matrix). %In this case, $Lm$ is the cardinality of the set, and $h$ the maximum possible first dimension of the Jacobian, which gives us the $Lmh$...\textbf{finish this logic}). 
We can apply our lemma to the set $G = \{(\J^{ij}_{\x^i}, \x^i) | \x \in S, j \geq i \}$, which has size at most $dm$. Let $\Delta^i = \W^i - \W^i(t)$ for any $j \geq i$:

\begin{equation}
\norm{\hat{\x}^j_i - \x^j} = \norm{\hat{\x}^j_i - \hat{\x}^j_{i-1} + \hat{\x}^j_{i-1} - \x^j} \leq \norm{(\hatx{j}{i} - \hatx{j}{i-1})} + \norm{\hatx{j}{i-1} - \x^j}
\end{equation}

The first equality adds zero, and the second inequality is the triangle inequality. The second term can be bounded by $(i-1)\epsilon \norm{\x^j}/d$ by the induction hypothesis. If we can bound the first term by $\epsilon\norm{\x^j}/d$ we prove the induction. We decompose this error in two terms:

\begin{gather}
\norm{(\hatx{j}{i} - \hatx{j}{i-1})} = \norm{\M^{ij}\W^i(t)\phi(\hatx{i-1}{})- \M^{ij}\W^i \phi(\hatx{i-1}{})}  \\
= \norm{\M^{ij}\W^i(t)\phi(\hatx{i-1}{}) - \M^{ij}\W^i(t)\phi(\hatx{i-1}{}) + \J^{ij}_{\x^i}\Delta^i \phi(\hatx{i-1}{}) - \J^{ij}_{\x^i}\Delta^i\phi(\hatx{i-1}{})} \label{breakup}\\
\leq \norm{\J^{ij}_{\x^i}\Delta^i \phi(\hatx{i-1}{})} + \norm{\M^{ij}\W^i(t)\phi(\hatx{i-1}{}) - \M^{ij}\W^i(t)\phi(\hatx{i-1}{})  - \J^{ij}_{\x^i}\Delta^i\phi(\hatx{i-1}{})}
\end{gather}

The first equality is results from the fact that all terms cancel but these in the difference. The second equality merely adds zero. The last inequality results from the triangle inequality. We assume $\norm{\W^i}_F \geq 1$. We note that we can remove the dependence on the interlayer smoothness if we assume a ReLU network, since $\M^{ij} = \J^{ij}$ and thus we can skip the decomposition in \ref{breakup} and only perform the bound on the Jacobian. 

We first focus on the first term. 

\begin{gather}
\begin{align}
&\norm{\J^{ij}_{\x^i}\Delta^i \phi(\hatx{i-1}{})}\\
&\leq \frac{\epsilon \mu_i \mu_{i \rightarrow}}{6cd}\norm{\J^{ij}_{\x^i}}\norm{\phi(\hatx{i-1}{})} \text{ (Lemma)} \\
&\leq \frac{\epsilon \mu_i  \mu_{i \rightarrow}}{6cd}\norm{\J^{ij}_{\x^i}}\norm{\hatx{i-1}{}} \text{ (1-Lipschitz Activation)} \\
&\leq \frac{\epsilon \mu_i  \mu_{i \rightarrow}}{3cd}\norm{\J^{ij}_{\x^i}}\norm{\x^{i-1}} \text{ (Induction Hypothesis)} \\
&\leq \frac{\epsilon \mu_i  \mu_{i \rightarrow}}{3d}\norm{\J^{ij}_{\x^i}}\norm{\phi(\x^{i-1})} \text{ (Activation Contraction)} \\
&\leq \frac{\epsilon  \mu_{i \rightarrow}}{3d}\norm{\J^{ij}_{\x^i}\phi(\x^{i-1})} \text{ (Layer Cushion)} \\
&= \frac{\epsilon  \mu_{i \rightarrow}}{3d}\norm{\J^{ij}_{\x^i}\x^{i}} \text{ (Definition of $\x^i$)} \\
&\leq \frac{\epsilon}{3d}\norm{\x^{j}} \text{ (Interlayer Cushion)}
\end{align}
\end{gather}

Next, we go back to the second term we wished to bound. We need to bound this term by $\frac{2\epsilon}{3d}\norm{\x^j}$ to get our final $\epsilon/d\norm{\x}^j$. 

\begin{gather}
\norm{\M^{ij}\W^i(t)\phi(\hatx{i-1}{}) - \M^{ij}\W^i(t)\phi(\hatx{i-1}{}) +  - \J^{ij}_{\x^i}\Delta^i\phi(\hatx{i-1}{})}\\
= \norm{(\M^{ij} - \J^{ij}_{\x^i})\W^i(t)\phi(\hatx{i-1}{}) - (\M^{ij} - \J^{ij}_{\x^i})\W^i(t)\phi(\hatx{i-1}{})}\\
\leq \norm{(\M^{ij} - \J^{ij}_{\x^i})\W^i(t)\phi(\hatx{i-1}{})} + \norm{ (\M^{ij} - \J^{ij}_{\x^i})\W^i\phi(\hatx{i-1}{})}\\
\end{gather}
We start with the second term. We note that $\W^i\phi(\hatx{i-1}{}) = \hatx{i}{i-1}$ by definition. By induction hypothesis, $\norm{\W^i \phi(\hatx{i-1}{}) - \x^i} \leq (i-1)\epsilon\norm{\x^i}/d \leq \epsilon \norm{\x^i}$. By interlayer smoothness, we have:
\begin{equation}
\norm{ (\M^{ij} - \J^{ij}_{\x^i})\W^i\phi(\hatx{i-1}{})} \leq \frac{\norm{\x^j}\epsilon}{\rho_{\delta}} \leq \epsilon/(3d)\norm{\x^j}
\end{equation}

Likewise, we have $\W(t)\phi(\hatx{i-1}{}) = \hatx{i}{i-1} + \Delta^i\phi(\hatx{i-1}{})$ (adding zero). thus:

\begin{gather}
\norm{\W^i(t)\phi(\hatx{i-1}{})-\x^i} \\
\leq \norm{\W^i\phi(\hatx{i-1}{})-\x^i} \norm{\Delta^i\phi(\hatx{i-1}{})} \\
\leq (i-1)\epsilon\norm{\x^j}/d + \epsilon/(3d)\norm{\x^j} \leq \epsilon \norm{\x^j}
\end{gather}

Thus by interlayer smoothness we have:
\begin{equation}
\norm{ (\M^{ij} - \J^{ij}_{\x^i})\W^i(t)\phi(\hatx{i-1}{})} \leq (\epsilon/(3d))\norm{\x^j}.
\end{equation}

Combining all results, we have:
\begin{equation}
\norm{\hat{\x}^j_{i} - \x^j} \leq \frac{i\epsilon}{d}\norm{\x^j} 
\end{equation} 

For $i = j = d$, we have:

\begin{equation}
\norm{\hat{\x}^d_{d} - \x^d} = \norm{f_{\W}(\x) - f_{\W(t)}(\x)} \leq \epsilon \norm{\x^j} = \epsilon \norm{f_{\W}(\x)} 
\end{equation} 
Which completes the proof. 
\end{proof}

Next, we bound the empirical classification loss of the compressed network with the empirical margin loss of the original network, using the lemma just proved.

\begin{lemma} (Modified from \cite{arora2018stronger})
For any fully-connected network $f_A$ with $\rho_{\delta} \geq 3d$, and probability $0 < \delta \leq 1$ and any margin $\gamma > 0$, $f_A$ can be compressed to another fully connected network $f_{\tilde{A}}$ such that for any $\x \in S$, $\hat{L}_0(f_{\tilde{A}}) \leq \hat{L}_{\gamma}(f_A)$. 
\label{loss}
\end{lemma}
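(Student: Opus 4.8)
The plan is to feed Lemma~\ref{induction} an error parameter $\epsilon$ calibrated to the margin $\gamma$, and then run a pointwise argument on $S$ showing that a sufficiently small perturbation of the network output cannot turn a point classified with margin $\gamma$ into a misclassified one. Write $f_A = f_{\W}$ and $f_{\tilde{A}} = f_{\W(t)}$. Since $\rho_\delta \geq 3d$, the hypotheses of Lemma~\ref{induction} are met, so for any prescribed $0 < \epsilon \leq 1$ the Gaussian approximation produces weights $\W(t)$ for which, with probability at least $1 - \delta/2$ over the draw of $\G$, $\norm{f_{\W}(\x) - f_{\W(t)}(\x)} \leq \epsilon \norm{f_{\W}(\x)}$ holds \emph{simultaneously} for every $\x \in S$ (the union bound over the $m$ samples and output coordinates is already absorbed into the choice of $\eta_i$ in that lemma). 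I would then set
\begin{equation}
\epsilon = \min\left\{1,\ \frac{\gamma}{2 \max_{\x \in S} \norm{f_{\W}(\x)}}\right\}
\end{equation}
(assuming the maximum is positive; otherwise $\hat{L}_{\gamma}(f_A) = 1$ and there is nothing to prove), which is legitimate because $\epsilon$ depends only on the fixed training set and the fixed original weights, not on the random $\G$. With this choice the high-probability event above gives $\norm{f_{\W}(\x) - f_{\tilde{A}}(\x)}_\infty \leq \norm{f_{\W}(\x) - f_{\tilde{A}}(\x)} \leq \gamma/2$ for all $\x \in S$.

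Next I would fix an arbitrary $\x \in S$ with true label $y$ on which $f_A$ is correct at margin $\gamma$, i.e. $f_{\W}(\x)[y] > \gamma + \max_{j \neq y} f_{\W}(\x)[j]$. Since every coordinate of $f_{\tilde{A}}(\x)$ is within $\gamma/2$ of the corresponding coordinate of $f_{\W}(\x)$,
\begin{equation}
f_{\tilde{A}}(\x)[y] \geq f_{\W}(\x)[y] - \frac{\gamma}{2} > \frac{\gamma}{2} + \max_{j \neq y} f_{\W}(\x)[j] \geq \max_{j \neq y} f_{\tilde{A}}(\x)[j],
\end{equation}
so $f_{\tilde{A}}$ classifies $\x$ correctly (with strict inequality, hence $\x$ is not charged by $\hat{L}_0$). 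Contrapositively, on the high-probability event the set of training points counted by $\hat{L}_0(f_{\tilde{A}})$ is contained in the set counted by $\hat{L}_{\gamma}(f_A)$; averaging the $0/1$ indicators over $S$ yields $\hat{L}_0(f_{\tilde{A}}) \leq \hat{L}_{\gamma}(f_A)$, which is the claim.

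The routine part is the margin bookkeeping just sketched; the part that takes care is the \emph{order of quantifiers} in the randomized construction — $\epsilon$, and hence the target accuracy, must be pinned down before $\G$ is sampled while still being allowed to depend on $S$ and on $f_A$, and the failure probability $\delta/2$ must be the one already allotted inside Lemma~\ref{induction} so the overall budget is respected. A secondary subtlety, which I would flag rather than resolve in this lemma, is that the compressed network's parameter count and the per-parameter precision $r$ needed to later invoke the informally-stated compression theorem of \cite{arora2018stronger} depend on $\epsilon$ through the sparsity levels $k_i$ of Proposition~\ref{binomial}; this lemma delivers only the empirical loss comparison, and that accounting is what gets assembled in the proof of Theorem~\ref{generalization}.
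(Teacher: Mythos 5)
Your proposal is correct and follows essentially the same route as the paper: invoke Lemma~\ref{induction} with $\epsilon$ calibrated to $\gamma/\max_{\x \in S}\norm{f_A(\x)}$ (handling the degenerate large-$\gamma$/zero-output case separately, as the paper does via its $\gamma^2 > 2\max\norm{f_A(\x)}_2^2$ branch), then show that an output perturbation this small cannot flip any training point classified with margin greater than $\gamma$. The only difference is cosmetic: the paper bounds the perturbation by $\gamma/\sqrt{2}$ in the $2$-norm and closes with a contradiction via a component-wise expansion, whereas you pass to the $\ell_\infty$ norm with threshold $\gamma/2$ and argue the margin preservation directly, which is a slightly more elementary (and marginally less tight in the constant) way to finish the same argument.
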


\begin{proof}
If $\gamma^2 > 2 \max_{\x \in S} \norm{f_A(\x)}^2_2$, for any pair $(\x,y)$ in the training set we have:
\begin{gather}
|f_A(\x)[y] - \max_{j \neq y} f_A(\x)[j]|^2 \leq 2 \max_{\x \in S} \norm{f_A(\x)}^2_2 < \gamma^2 \\
|f_A(\x)[y] - \max_{j \neq y} f_A(\x)[j]| < \gamma 
\end{gather}

Recall that the empirical classification loss $(\gamma = 0)$ is $\frac 1m \sum_{i_1}^m \one[ f(\x_i)[y_i] \leq \max_{j \neq y}f(\x)[j]]$. This will be at most $1$. Since the margin cannot be greater than $\gamma$, this means $\hat{L}_{\gamma}(f_A) = \frac 1m \sum_{i_1}^m \one[ f(\x_i)[y_i] \leq \gamma  + \max_{j \neq y}f(\x)[j]] = 1$. Thus, if $\gamma^2 > 2\max_{\x \in S}\norm{f_A(\x)}^2_2$, then $\hat{L}_0(f_{\hat{A}}) = \hat{L}_{\gamma}(f_A)$ and the lemma statement is true. If $\gamma^2 \leq 2 \max_{\x \in S}\norm{f_A(\x)}^2_2$, we can set $\epsilon^2 = \frac{\gamma^2}{2\max_{\x\in S}\norm{f_A(\x)}^2_2}$ in Lemma \ref{induction}. Thus we infer that for any $\x \in S$

\begin{equation}
\norm{f_A(\x) - f_{\tilde{A}}(\x)}_2 \leq \gamma / \sqrt{2}
\end{equation}

For any $(\x,y)$, if the margin loss $\hat{L}_{\gamma}(f_A)$ is $1$, then the inequality holds, since the left-hand side cannot be greater by definition. If the margin loss is less than one, this means that the output margin in $f_A$ is greater than $\gamma$ ($f_A(\x)[y] - \max_{j \neq y} f_A(\x)[j] > \gamma $). In order for the classification loss on the left hand side to be $1$, we need $f_{\tilde{A}}(\x)[y] - \max_{j \neq y}f_{\tilde{A}}(\x)[j] \leq 0$ to be true.  Which is possible when:

\begin{equation}
\norm{f_A(\x) - f_{\tilde{A}}(\x)}_2 > \gamma/\sqrt{2}
\end{equation}

Which is a contradiction. We demonstrate this as follows: let the $(y, \max_j)$ components of $f_A(\x)$ be $(\gamma + a, b)$ where $a > b$. Let the two $(y, \max_j)$ components of $f_{\tilde{A}}$ be $(c, d)$, where $d > c$. The squared 2-norm of these components is:

\begin{equation}
(\gamma + a - c)^2 + (b-d)^2
\end{equation}

If $a = c$, this is equal to $\gamma^2$ and hence the 2-norm is $\gamma > \gamma /\sqrt{2}$. If $a > c$, then the square of the two norm is $(\gamma + pos)^2 + (b-d)^2 > \gamma^2$, so the two norm is $\gamma > \gamma/\sqrt{2}$. If $a < c$, we expand:

\begin{gather}
\begin{align}
& (\gamma + a - c)^2 + (b-d)^2 \\
&=\gamma^2 + 2\gamma(a - c) + (a-c)^2 + (b-d)^2 \\
&=\frac{\gamma^2}{2} + \frac{\gamma^2}{2} + 2\gamma(a - c) + (a-c)^2 + (b-d)^2 \\
&=\frac{\gamma^2}{2} + \frac 12 (\gamma^2 + 4\gamma(a - c)) + (a-c)^2 + (b-d)^2 \\
&=\frac{\gamma^2}{2} + \frac 12 \left(\left(\gamma + \frac{4\gamma(a - c)}{2}\right)^2-\left(\frac{4(a - c)}{2}\right)^2\right) + (a-c)^2 + (b-d)^2 \\
&=\frac{\gamma^2}{2} + \frac 12\left(\gamma + \frac{4\gamma(a - c)}{2}\right)^2- \frac 12\left(\frac{4(a - c)}{2}\right)^2 + (a-c)^2 + (b-d)^2 \\
&=\frac{\gamma^2}{2} + \frac 12\left(\gamma + \frac{4\gamma(a - c)}{2}\right)^2- (a-c)^2 + (b-d)^2 \\
\end{align}
\end{gather}
We know $b < a < c < d$. Thus $(b-d)^2 > (a-c)^2$. So the square of the 2-norm is greater than $\gamma^2/2$ and the 2-norm is greater than $\gamma/\sqrt{2}$, which completes the proof. 

\end{proof}

We will utilize a lemma found in \cite{neyshabur2017pac}. 

\begin{lemma} (Originally from \cite{neyshabur2017pac} and also in \cite{arora2018stronger})
Let $f_A$ be a $d$-layer network with weights $A = \{ A^1, \cdots, A^d \}$. Then for any input $x$, weights $A$ and $\hat{A}$, if for any layer $i$, $\norm{A^i -\hat{A}^i}\leq \frac 1d \norm{A^i}$, then:
\begin{equation}
\norm{f_A(\x)-f_{\hat{A}}(\x)} \leq e \norm{\x}\left(\prod_{i=1}^d \norm{A^i}_2\right)\sum_{i=1}^d \frac{\norm{A^i - \hat{A}^i}_2}{\norm{A^i}_2}
\end{equation}
\label{perturbation}
\end{lemma}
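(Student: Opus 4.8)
The statement is the perturbation lemma of \cite{neyshabur2017pac}, and the plan is to prove it by a layer-by-layer induction that tracks how the per-layer perturbation $U^i := \hat A^i - A^i$ (which satisfies $\norm{U^i}_2 \le \tfrac1d\norm{A^i}_2$ by hypothesis; all norms here are spectral) propagates forward through the network. For $0 \le i \le d$ write $f^i_A(\x)$ and $f^i_{\hat A}(\x)$ for the outputs of the first $i$ layers under weights $A$ and $\hat A$, so that $f^0 = \x$ and $f^d = f_A$, and each layer is an affine (bias-free) map followed by the fixed $1$-Lipschitz activation $\phi$ with $\phi(0)=0$, as elsewhere in the paper. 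First I would record the elementary forward-growth bound $\norm{f^i_A(\x)} \le \big(\prod_{l=1}^i \norm{A^l}_2\big)\norm{\x}$, which is immediate by induction from $\norm{\phi(A^i z)} \le \norm{A^i z} \le \norm{A^i}_2\norm{z}$.

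The core of the argument is a recursion for $\epsilon_i := \norm{f^i_{\hat A}(\x) - f^i_A(\x)}$. Using $1$-Lipschitzness of $\phi$ and then inserting $\pm(A^i + U^i)f^{i-1}_A(\x)$:
\begin{align*}
\epsilon_i &\le \norm{(A^i+U^i)f^{i-1}_{\hat A}(\x) - A^i f^{i-1}_A(\x)} \\
&= \norm{(A^i+U^i)\big(f^{i-1}_{\hat A}(\x) - f^{i-1}_A(\x)\big) + U^i f^{i-1}_A(\x)} \\
&\le \big(\norm{A^i}_2 + \norm{U^i}_2\big)\,\epsilon_{i-1} + \norm{U^i}_2 \Big(\prod_{l=1}^{i-1}\norm{A^l}_2\Big)\norm{\x},
\end{align*}
where the last line uses the triangle inequality, submultiplicativity, and the forward-growth bound. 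Dividing by $\prod_{l=1}^i\norm{A^l}_2$ and setting $t_i := \norm{U^i}_2/\norm{A^i}_2 \le 1/d$ and $\tilde\epsilon_i := \epsilon_i/\prod_{l=1}^i\norm{A^l}_2$, this collapses to the scalar recursion $\tilde\epsilon_i \le (1+t_i)\tilde\epsilon_{i-1} + t_i\norm{\x}$ with $\tilde\epsilon_0 = 0$. Unrolling gives $\tilde\epsilon_d \le \norm{\x}\sum_{i=1}^d t_i\prod_{l=i+1}^d(1+t_l) \le \norm{\x}\big(\prod_{l=1}^d(1+t_l)\big)\sum_{i=1}^d t_i \le e\,\norm{\x}\sum_{i=1}^d t_i$, using $\prod_{l=1}^d(1+t_l) \le (1+\tfrac1d)^d \le e$; multiplying back through by $\prod_{l=1}^d\norm{A^l}_2$ recovers exactly the claimed inequality.

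There is no genuine obstacle here — the proof is two short nested inductions — so the only thing requiring care is the bookkeeping in the final unrolling: one checks that each partial product $\prod_{l=i+1}^d(1+t_l)$ is dominated by $\prod_{l=1}^d(1+t_l)$ since every factor is at least $1$, and that the hypothesis $t_l \le 1/d$ is precisely what forces $(1+1/d)^d \le e$, which is the origin of the constant $e$ in the statement. It is also worth flagging explicitly that the bound uses $\phi$ being $1$-Lipschitz with $\phi(0)=0$ (true for the ReLU networks considered throughout) and that the network has no biases, so that $f^i_A(0)=0$ and the forward-growth bound holds.
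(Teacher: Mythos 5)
Your proof is correct, and in fact the paper itself gives no argument for this lemma at all---it simply defers to the cited reference \cite{neyshabur2017pac}, whose standard proof is exactly the layer-by-layer error-propagation induction you carry out (perturbation recursion, normalization by the product of spectral norms, unrolling, and $(1+\tfrac1d)^d \le e$). So your write-up supplies, correctly, precisely the omitted argument; no gaps to flag.
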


\begin{proof}
Proof is contained in \cite{neyshabur2017pac}.
\end{proof}

We now have what we need to prove Theorem \ref{generalization}. For the final part of the proof, we need to compute the covering number so that we can use the Dudley entropy integral to bound the Rademacher complexity, which bounds the \emph{generalization gap}.  

\begin{proof} (Modified from \cite{arora2018stronger})
In order to compute a covering number, we need to compute the required accuracy in each parameter in the compressed network to cover the original network. Let $\W^i$ be the weights of the original net work, and $\W(t)^i$ be the weights of the compressed network. We assume that the network is balanced:

\begin{equation}
\forall i, j \quad \norm{\W^i}_F= \norm{\W^j}_F = \beta
%\max_i \norm{\W^i}_F = \beta
\end{equation}

For any $x \in S$ we have:

\begin{equation}
 \beta^d = \prod_{i=1}^d \norm{\W^i}_F  \leq  \frac{c \norm{\x^1}}{\norm{\x} \mu_1}\prod_{i=2}^d \leq \frac{c^2 \norm{\x^2}}{\norm{\x}\mu_1 \mu_2}\prod_{i=3}^d \leq \frac{c^d\norm{f_A(\x)}}{\norm{\x}\prod_{i=1}^d \mu_i}
 % \prod_{i=1}^d \norm{\W^i}_F  \leq  \beta^d   \leq  \frac{c \norm{\x^1}}{\norm{\x} \mu_1}\prod_{i=2}^d \leq \frac{c^2 \norm{\x^2}}{\norm{\x}\mu_1 \mu_2}\prod_{i=3}^d \leq \frac{c^d\norm{f_A(\x)}}{\norm{\x}\prod_{i=1}^d \mu_i}
\end{equation}

Now, by Lemma \ref{induction}, we know that $\norm{\hatx{j}{i} - \x^j} \leq (i/d)\epsilon \norm{\x^j}$. $\norm{\W(t)^i}_F = \norm{\W^i - \W^i + \W(t)^i} = \norm{\W^i + \Delta^i} \leq \beta + \norm{\Delta^i} \leq \beta(1 + \frac 1d)$. Let $\hat{\W}^i$ correspond to the weights after approximating each parameter in $\W(t)^i$ with accuracy $\nu$. We can then say:

\begin{equation}
\norm{\hat{\W}^i - \W(t)^i}_F \leq \sqrt{k_i} \nu \leq \sqrt{q} \nu
\end{equation}

where $q$ is the total number of parameters. We use Lemma $\ref{perturbation}$ to bound:

\begin{gather}
|l_{\gamma}(f_{\hat{\W}}(\x), y) - l_{\gamma}(f_{\W(t)}(\x),y) | \leq e\norm{\x} \prod_{i=1}^d \norm{\W(t)^i}  \sum_{i=1}^d \frac{\norm{\W(t)^i - \hat{\W}^i}_F}{\norm{\W(t)}_F}\\
 \leq  \left( e^2 \norm{\x}\beta^{d} \beta^{-1} \right) \sum_{i=1}^d \norm{\W(t)^i - \hat{\W}^i}_F 
 \leq \left( e^2 \frac{c^d \norm{f_A(\x)}}{\prod_{i=1}^d \mu_i} \beta^{-1} \right) \sum_{i=1}^d \norm{\W(t)^i - \hat{\W}^i}_F \\
 \leq \left( e^2\frac{c^d \norm{f_A(\x)}d}{\prod_{i=1}^d \mu_i} \right) \frac{\sqrt{q}\nu}{ \beta}  
 = \frac{\sqrt{q} \nu  \kappa}{\beta} \leq \frac{q \nu \kappa}{\beta}
\end{gather}
where $\kappa$ is a placeholder for the variables in parenthesis in the final line. 

We know the maximum a parameter can be is $2\beta$ since $\norm{\tilde{\W(t)}}_F \leq \beta (1 + \frac 1d)$. Thus the log number of choices to get an $\epsilon$ cover is $\log \left(\frac {2q \kappa}{\epsilon}\right)$. Thus the covering number is $q \log \left( \frac{2 q \kappa}{\epsilon}\right)$. We know that the Rademacher complexity bounds the generalization error, and that the Dudley entropy integral bounds the Rademacher complexity.

\begin{gather}
%\int^D_0 \sqrt{\log\mathcal{N}(\epsilon, S)} d\epsilon \leq \\
%\int^D_0 \sqrt{\mathcal{N}(\epsilon, S)} d\epsilon = \\
\int_0^D \sqrt{q \log\left(\frac{2 q \kappa}{\epsilon}\right)} d\epsilon = \sqrt{q \log \left( \frac{2 q \kappa}{\epsilon}\right)}\frac 12 \left(2\epsilon - \frac{\sqrt{\pi} 2q\kappa \text{erf}\left(\sqrt{\log(2q\kappa/\epsilon)}\right)}{\sqrt{\log(2q\kappa / \epsilon)}} \right)\Bigg|^D_0
\end{gather}
Which completes the proof. 

\end{proof}

\section{Experimental Details: Plot of the Stable Rank}
\label{appendixD}

In Figure \ref{stable-rank}, the stable rank of the final dense layer in a collection of pre-trained architectures is computed as a function of $\alpha$, the exponent of a power-law fit. The spectral norm computed for the stable rank is computed through power iteration, using $1000$ iterations. The power-law fit was done using the \texttt{powerlaw} python package, which uses a maximum-likelihood fit. The data was fit to a mixture of linear regressors using expectation maximization. A total of 43 \texttt{pytorch} pre-trained architectures were used, and they are listed as follows: 

Deep Residual Learning for Image Recognition \cite{he2016deep}:\\
\texttt{resnet20\_cifar10},  
\texttt{resnet56\_cifar10}, 
\texttt{resnet110\_cifar10}, \\
\texttt{resnet164bn\_cifar10}, 
\texttt{resnet272bn\_cifar10}, 
\texttt{resnet20\_cifar100}, \\
\texttt{resnet56\_cifar100}, 
\texttt{resnet20\_cifar100},
\texttt{resnet56\_cifar100},\\
\texttt{resnet100\_cifar100},
\texttt{resnet20\_svhn},
\texttt{resnet56\_svhn},\\
\texttt{resnet110\_svhn}.

Densely Connected Convolutional Networks \cite{huang2017densely}:\\
\texttt{densenet40\_k12\_cifar10},
\texttt{densenet40\_k12\_cifar100},
\texttt{densenet40\_k12\_bc\_svhn},\\
\texttt{xdensenet40\_2\_k24\_bc\_cifar10},
\texttt{xdensenet40\_2\_k24\_bc\_svhn}.

Deep Pyramidal Residual Networks \cite{han2017deep}:\\
\texttt{pyramidnet110\_a48\_svhn},
\texttt{pyramidnet110\_a48\_cifar10},
\texttt{pyramidnet110\_a84\_cifar10}.

Identity Mappings in Deep Residual Networks \cite{he2016identity}:\\
\texttt{preresnet20\_svhn},
\texttt{preresnet56\_svhn},
\texttt{preresnet110\_svhn},\\
\texttt{preresnet20\_cifar10},
\texttt{preresnet56\_cifar10},
\texttt{preresnet110\_cifar10}.

Squeeze and Excitation Networks \cite{hu2018squeeze}:\\
\texttt{seresnet20\_cifar10},
\texttt{seresnet56\_cifar10},
\texttt{seresnet110\_cifar10},\\
\texttt{seresnet20\_svhn},
\texttt{seresnet56\_svhn},
\texttt{seresnet110\_svhn},\\
\texttt{sepreresnet20\_cifar10},
\texttt{sepreresnet56\_cifar10},
\texttt{sepreresnet110\_cifar10},\\
\texttt{sepreresnet20\_svhn},
\texttt{sepreresnet56\_cifar10},
\texttt{sepreresnet110\_cifar10}.

Wide Residual Networks \cite{zagoruyko2016wide}:\\
\texttt{wrn16\_10\_cifar10},
\texttt{wrn28\_10\_cifar10},
\texttt{wrn28\_10\_svhn},\\
\texttt{wrn40\_8\_cifar10},
\texttt{wrn40\_8\_svhn}.

\section{Experimental Details: Table of Compression Accuracy}
\label{appendixE}

In Table \ref{accuracy}, the accuracy of a model is computed before and after compressing the final dense layer with a Gaussian approximation. The mean and standard deviation are computed of the matrix elements of the weight matrix. Matrix elements less than or equal to the computed standard deviation are replaced with a Gaussian random variable with the computed mean and standard deviation. Matrix elements greater than the standard deviation are retained. This procedure is performed ten times, and the mean and standard deviation of the post-compression accuracy is computed from the $10$ trials. 

\end{document}